\newcommand {\ctn}{\cite} 
\newtheorem{definition}{Definition}[section]
\newtheorem{definition*}{Definition}
\newtheorem{theorem}{Theorem}[section]
\newtheorem{remark}{Remark}
\newtheorem{observation}{Observation}
\newcommand{\bed}{\begin{definition}}
\newcommand{\eed}{\end{definition}}
\newcommand{\beq}{\begin{equation}}
\newcommand{\eeq}{\end{equation}}
\newcommand{\bfI}{\mathbf{I}}
\newcommand{\bfX}{\mathbf{X}}
\newcommand{\bfL}{\mathbf{L}}
\newcommand{\bfC}{\mathbf{C}}
\newcommand{\bfE}{\mathbf{E}}
\newcommand{\bfV}{\mathbf{V}}
\newcommand{\bfW}{\mathbf{W}}
\newcommand{\bfP}{\mathbf{P}}
\newcommand{\bfx}{\mathbf{x}}
\newcommand{\bfy}{\mathbf{y}}
\newcommand{\bfZ}{\mathbf{Z}}
\newcommand{\bfc}{\mathbf{c}}
\newcommand{\bfF}{\mathbf{F}}
\newcommand{\bfQ}{\mathbf{Q}}
\newcommand{\bfe}{\mathbf{e}}
\newcommand{\bfS}{\mathbf{S}}
\newcommand{\bfg}{\mathbf{g}}
\newcommand{\mbfX}{\bfC_\bfX}
\newcommand{\mbfL}{\bfC_\bfL}
\newcommand{\mbfE}{\bfC_\bfE}
\newcommand{\mbfS}{\bfC_\bfS}
\newcommand{\hP}{\hat{\mathbf{P}}}
\DeclareMathOperator*{\argmin}{\rm argmin}
\begin{document}

\title{Manifold Fitting in Ambient Space} 
\maketitle

\author{\noindent Zhigang Yao\\
Department of Statistics and Applied Probability  \\ 21 Lower Kent Ridge Road\\ 
National University of Singapore, Singapore 117546  \\ 
\noindent  email: \texttt{zhigang.yao@nus.edu.sg}\\  

\noindent Bingjie Li\\
Department of Statistics and Applied Probability  \\ 21 Lower Kent Ridge Road\\ 
National University of Singapore, Singapore 117546  \\ 
\noindent  email: \texttt{bjlistat@nus.edu.sg}\\

\noindent Wee Chin Tan\\
Department of Statistics and Applied Probability  \\ 21 Lower Kent Ridge Road\\ 
National University of Singapore, Singapore 117546  \\ 
\noindent  email: \texttt{statanw@nus.edu.sg}}

\vspace{0.1 in}
\begin{center}
\textbf{Abstract}
\end{center}

Modern sample points in many applications no longer comprise real vectors in a real vector space but sample points of much more complex structures, which may be represented as points in a space with a certain underlying geometric structure, namely a manifold. Manifold learning is an emerging field for learning the underlying structure. The study of manifold learning can be split into two main branches: dimension reduction and manifold fitting. With the aim of combining statistics and geometry, we address the problem of manifold fitting in the ambient space. Inspired by the relation between the eigenvalues of the Laplace-Beltrami operator and the geometry of a manifold, we aim to find a small set of points that preserve the geometry of the underlying manifold. From this relationship, we extend the idea of subsampling to sample points in high-dimensional space and employ the Moving Least Squares (MLS) approach to approximate the underlying manifold. We analyze the two core steps in our proposed method theoretically and also provide the bounds for the MLS approach. Our simulation results and theoretical analysis demonstrate the superiority of our method in estimating the underlying manifold.

\vspace*{.3in}

\noindent\textsc{Keywords}: {Manifold learning, Laplace-Beltrami operator, Shape DNA, Moving Least Squares}


\section{Introduction}
The digital age in which we live has resulted in huge numbers of high-dimensional sample points. Conducting an analysis of such sample points is extremely difficult and computationally costly because of the curse of dimensionality. Traditional statistics deals with observations that are essentially elements of a real vector space. However, today, many problems of statistical interest in the sciences require the analysis of sample points that consist of more complex objects, taking values in spaces that are naturally not (Euclidean) vector spaces but still feature some geometric structure.  For instance, in object recognition, an image with a resolution of $64\times64$ is directly represented as a vector in a $4,096$-dimensional vector space. For a vector space of such high dimensions, the sample points are obviously too sparse for efficient processing and analysis. More specifically, one million sample points in a 100-dimensional vector space is considered too small a set for analysis. However, with the assumption that the sample points are close to a manifold of much lower dimensions, e.g. a 6-dimensional manifold, then, in theory, one million sample points is enough to extract the relevant information required.

Manifold learning aims to handle the curse of dimensionality by assuming that the sample points are situated near or on a manifold with dimensions much lower than the dimension of the ambient space. To be precise, the high-dimensional sample points are assumed to be locally homeomorphic to the lower-dimensional Euclidean space (the manifold), while the form of the manifold is not necessarily given explicitly. In recent years, manifold learning has been implemented in many types of research, such as pattern recognition and big data analytics. Manifold learning can be split into two different areas of research, the most common being dimension reduction, for example Principal Component Analysis (PCA) \cite{PCA}, Isomap \cite{isomap} and Local Linear Embedding (LLE) \cite{LLE}. Here, the sample points are embedded into a lower-dimensional Euclidean space that can then be used for other types of analysis, such as classification. The key assumption for dimension reduction is that the sample points have a local linear structure, and the mapping constructed preserves some kind of distance. The other area of research involves approximating the underlying manifold directly in the ambient space from sample points. The goal is to project the sample points onto the underlying manifold to denoise the sample points. This area of research is different from dimension reduction in the sense that the projected points still lie in the original ambient space.

It is of great statistical interest to estimate the manifold in ambient space in order to quantify the uncertainty of the learned manifold. For many real-life applications, in particular, it is more convenient to visualize the sample points in the ambient space than in a lower-dimensional space. It should be noted that all the previously mentioned algorithms only aim to find a global embedding of the sample points in a space whose dimensions are much smaller than the input space (mainly focusing on the lower-dimensional geometric properties of the manifold), and therefore are not necessarily able to output a manifold in the ambient space. To the best of our knowledge, less attention has been paid to denoising or estimating the underlying manifold from the sample points in manifold learning. One recent exception is the seminal work of \ctn{fefferman}, who developed an algorithm that outputs a provably smooth manifold from sample points. They performed an exhaustive search to define an approximate squared-distance function from the sample points aiming for the manifold learning. \ctn{KDEandPCA} also demonstrated two different methods of using the approximate squared-distance function to approximate the underlying manifold from sample points. To differentiate, the researchers eliminated the use of the exhaustive search. The study of manifold learning has also been linked to the generalized Whitney problem. \ctn{feffermanwhitney1} aimed to estimate a Riemannian manifold $(\mathcal{M},g)$ to approximate a metric space $(\mathcal{X},d_{\mathcal{X}})$. Another technique using a statistical approach relying on a graph-based diffusion process is discussed in \ctn{noisy1}. On the other hand, for noiseless manifold sample points, some research has been conducted on the reconstruction of the manifold \cite{ManifoldRecon1,ManifoldRecon2,ManifoldRecon3}. For specific types of sample points like noisy images, Locally Linear Denoising (LLD) is a method developed by \ctn{noisy2} to denoise the sample points by exploiting the structure of the underlying manifold. Theoretically, \ctn{minmaxME} have proven a rate of convergence for approximating a manifold given sample points; the optimal rate of convergence depends only on the sample points and the intrinsic dimension of the manifold. Recently, \ctn{yaoxiamf} proposes a new manifold-fitting method. Their output manifold is constructed by directly estimating the tangent space at the projected points on underlying manifold, rather than at the sample points, to decrease the error caused by the noise.

\subsection{Overview and Contribution}

In this paper, we focus on manifold fitting. Given a set of high-dimensional sample points, we assume that the sample points lie near (presumably with noise) a smooth manifold of much lower dimensions. We also assume that the intrinsic dimension of the manifold is known. This paper focuses on fitting a manifold to the sample points and mapping it to the underlying manifold; the main goal is to reduce the computational complexity when we perform manifold fitting. Our method is different from dimension reduction in the sense that the output of our algorithm does not consist of lower-dimensional sample points that are projected from the sample points. Instead, the output is a fitted manifold that lies in the original ambient space. Our algorithm is also able to handle sample points that lie in high-dimensional space; this will be further established in our numerical simulations when we apply our algorithm to image sample points.  Our algorithm consists of two core parts: a subsampling portion and a manifold-fitting approach.

With the subsampling approach, we utilize the spectrum of the Laplace–Beltrami operator. This operator is the extension of the Laplace operator in Euclidean space to the manifold. The eigenvalues of the Laplace–Beltrami operator can be used to identify a manifold up to isometry \cite{shapeDNA}. When sample points are given, the eigenvalues of the Laplace–Beltrami operator can be used to select a subset that still preserves the geometry of the underlying manifold. Our idea is that by controlling an appropriate number of sample points, which is usually very small, we aim to find a subset of points that are essential for recovering the manifold. To achieve this, the key is to make use of the Laplace–Beltrami operator to fine-tune a subset of points such that the eigenvalues do not change by a large margin. We illustrate this idea in Section S5 of the supplementary materials.

By using the eigenvalues of the Laplace–Beltrami operator, the subsampled set still preserves the geometry of the underlying manifold, as demonstrated by the example of the unit sphere in Section S5 of the supplementary materials. Therefore, by working with the subsampled set, we can reduce the complexity of the computations and still retrieve the geometry of the underlying manifold.

Once the subsampled set is determined, the second step is to fit a manifold to the subsampled set. In this paper, we choose to apply a nonlinear Moving Least Squares (MLS) method as a candidate in our framework to fit an underlying manifold. The MLS approach to fitting a manifold has been studied by \ctn{MMLS}, who extended the idea of MLS to manifold fitting and provided some theoretical analysis of their method. They chose to work with a constrained local PCA method to obtain the local coordinate system, which is required with the Least Squares approach. However, we choose to work with a non-constrained local PCA to obtain the required local coordinate system. For our MLS approach, we also provide an error bound with respect to Gaussian noise. The error bounds can also be extended to other types of noise, although these are not explored in this paper.

The benefits of the combination of subsampling and manifold fitting algorithms are twofold. First, the subsampling step selects only a small number of samples to fit the manifold, significantly reducing the time cost.  The sampled points are still sufficient to preserve the underlying structure of the manifold. The experiments in Section \ref{Sect5} show that the manifolds we obtain are quite close to the ground-truth manifolds. Second, sampled points have better distribution properties than the original data points. Therefore, we are able to obtain more accurate results for manifold fitting from the sampled points than directly from the original data. Thus the numerical performance of our method in Section \ref{Sect5} strongly supports the analysis above.

\subsection{Organization}

The rest of the paper is organized as follows. In Section \ref{Sect2}, we discuss the first main step of our algorithm, the subsampling step. In this section, we define the measure of a point and show how it can be used to perform subsampling. The second main step is manifold fitting, which involves MLS and will be presented in Section \ref{Sect3}. We also define MLS and show how it can be used for surface approximation. In Section \ref{sect:errorbounds}, we present theoretical results for the accuracy of tangent-space estimation with sample points, using PCA and the manifold-fitting approach. The theoretical results for the tangent-space estimation are extensions of the work of \ctn{PCA_tangentestimation2}. We also briefly describe our algorithm here. 
In Section \ref{Sect5}, we present our simulation results. 
Finally, Section \ref{Sect6} concludes the paper and suggests possible directions for future research.

\subsection{General Notations}

Throughout this paper, we use sample points to represent a set of distinct random samples of the manifold with noise, unless otherwise stated. The term measure that is used in this paper is to quantify the change in the spectrum of the Laplace–Beltrami operator with regard to a point. We use $X$ to represent a set with a finite number of sample points, and capital and lowercase boldface characters to represent matrices and vectors, respectively. We also use $\bfI$, to represent the identity matrix and $I$ to represent the number of points in a set.


\section{Subsampling}\label{Sect2}

The subsampling strategy is based on a measure of importance, calculated using a function of the heat kernel of the manifold that is related to the spectrum of the Laplace–Beltrami operator. We choose to work with the heat kernel as it allows us to easily quantify the change in the spectrum of the Laplace–Beltrami operator.
We first quantify the importance of a point and formally introduce the measure of this importance. Following this, we propose our subsampling strategy. Finally, we illustrate the effect of the subsampling strategy with a simple example.

\subsection{Measure of a Point}
\ctn{spectral} innovatively defined and used a measure function to determine how much a new point affects the eigenvalues of the heat kernel of a manifold.

\bed{(Measure Function)}
The measure function, $s:\mathcal{M}\rightarrow [0,1]$, is given by
\beq\label{eq:measure}
s(\bfx)=1-\frac{\mathbf{h}_t^T\mathbf{H}_t^{-1}\mathbf{h}_t}{K(\bfx,\bfx,t)},
\eeq
where $\mathbf{h}_t$ is a vector such that $(\mathbf{h}_t)_i=K(\bfx,\bfx_i,t)$ and $\mathbf{H}_t$ is a matrix such that $(\mathbf{H}_t)_{ij}=K(\bfx_i,\bfx_j,t)$.
\eed
Here, $K(\cdot,\cdot,t)$ represents the heat kernel of the manifold. The derivation of equation \eqref{eq:measure} can be found in Appendix B of \ctn{spectral}, and so is not reproduced here. Equation \eqref{eq:measure} shows that the measure of a point $\bfx$ can be calculated using the heat kernel of the manifold. As shown in \ctn{spectral}, the measure function can also be used to determine the perturbation of the eigenvalues of the Laplace–Beltrami operator. In \ctn{spectral}, it is shown that, if a point has a large measure, it could cause a large perturbation on the spectrum. In the algorithm, this means that, for a point $\bfx$ that has a large measure, the point contains important information about the geometry of the underlying manifold, and therefore should be considered a subsampled point.

However, for any abstract manifold with intrinsic dimension $m$, we do not know the exact form of the heat kernel. Thus, a direct computation of the measure function for an abstract manifold is not possible since it depends on the heat kernel. Recall that the heat kernel is used to approximate the Laplace–Beltrami operator when $t\rightarrow 0$. There exists an asymptotic expansion for $K(\bfx_1,\bfx_2,t)$,  and \ctn{heatkernel_approximation} carefully studied the asymptotic behavior of $K(\bfx_1, \bfx_2, t)$.  Remark 2.2 in that author's work may provide more understanding of the asymptotic expansion, and Remark 2.1 indicates that, if $\bfx_1$ and $\bfx_2$ lie in a $\delta_t$-neighborhood of $\bfx$ where $\delta_t=o(t^{1/4})$, when $t\rightarrow 0$, 
\beq\label{eq:kernel_approximation2}
K(\bfx_1,\bfx_2,t)=\frac{1}{(4\pi t)^{m/2}} \exp\Bigg(\frac{-\|E^{-1}_\bfx(\bfx_1)-E^{-1}_\bfx(\bfx_2)\|^2}{2t}\Bigg)(1+o(1)),
\eeq
where $E^{-1}_\bfx$ is the logarithm map, which takes a neighborhood of $\bfx$ in $\mathcal{M}$ and maps it to the tangent space $T_\bfx\mathcal{M}$. 

In accordance with \eqref{eq:kernel_approximation2}, we will define a local measure function that will be used for our algorithm to approximate \eqref{eq:measure}.

\bed{(Local Measure Function)}\label{def:localmeasure}
Let $\mathcal{M}$ be a smooth manifold with an intrinsic dimension $m$, and $N_\mathcal{M}(\bfx,r)$ be the set of neighborhood points on $\mathcal{M}$ that are of distance $r$ away from $\bfx\in\mathcal{M}$. Then, the local measure on $N_\mathcal{M}(\bfx,r)$ is given by
\beq \label{eq:localmeasure}
s_N(\bfx)=1-\frac{\mathbf{\hat{h}}^T\mathbf{\hat{H}}^{-1}\mathbf{\hat{h}}}{\hat{K}(\bfx,\bfx,t_0)},
\eeq
where $\hat{K}$ is the heat kernel given in \eqref{eq:kernel_approximation2} for a fixed time $t_0$, and $\mathbf{\hat{h}}$ and $\mathbf{\hat{H}}$ depend on the heat kernel $\hat{K}$.
\eed
For our numerical calculations, we omit $1+o(1)$ and use 
\beq\label{eq:localheatkernel}
\hat{K}=\frac{1}{(4\pi t)^{m/2}} \exp\Bigg(\frac{-\|E^{-1}_\bfx(\bfx_1)-E^{-1}_\bfx(\bfx_2)\|^2}{4t}\Bigg)
\eeq
for the approximations.

\begin{remark}
Intuitively, this definition for the heat-kernel approximation makes sense. When we consider a small patch of the manifold around the point $\bfx$, and this patch is close to the tangent space of the manifold at $\bfx$, we can then expect the heat kernel of the manifold to behave similarly to the heat kernel in Euclidean space. We use the simple example of the three-dimensional hyperbolic space $\mathbb{H}^3_k$ with $-k^2$ curvature: its heat kernel is
$$K_{\mathbb{H}^3_k}(\bfx,\bfy,t)=\frac{1}{(4\pi t)^{m/2}} \frac{k\|\bfx-\bfy\|}{\sinh(k\|\bfx-\bfy\|)} \exp\Bigg(\frac{-\|\bfx-\bfy\|^2}{4t} - k^2t\Bigg).$$
Here, it is obvious to see that, as $k\rightarrow0$, $K_{\mathbb{H}^3_k}(\bfx,\bfy,t)\rightarrow K(\bfx,\bfy,t)$.
\end{remark} 

\subsection{The Subsampling Algorithm}\label{subs:algorithm1}

Given a set $X$ that lies close to or on a manifold of dimension $m$, we aim to get a subset, $\hat{X}\subset X$, such that $\hat{X}$ contains points with considerable measure. The points in $\hat{X}$ will be used as anchor points to fit a manifold to the sample points. Introducing the subsampling step has two purposes: to reduce the computational complexity of the MLS (as opposed to using the entire set $X$ for the anchor points) and to obtain a better distribution of anchor points, and hence a more consistent fit of the manifold.

To this end, we construct $\hat{X}$ from an iterative algorithm. First, we define an empty set $\hat{X}$, and then iteratively check and add points from $X$ until we have gone through all points. Let $X=\{\bfx_1,\ldots,\bfx_k\}$ and $\hat{X}=\emptyset$. At each step $i$, we remove a random point $\bfx$ and points from $X$ that are of distance $r$ or less from $\bfx$, and call this set $N_i$, as shown in Figure \ref{fig:1a}. We then find $\hat{X}_i\subset \hat{X}$ such that points in $\hat{X}_i$ are also of distance $r$ or less from $\bfx$. For each point in $N_i$, we calculate the local measure defined in Definition \ref{def:localmeasure} using points in $\hat{X}_i$. If this value is more than a threshold $\epsilon$, we add it to $\hat{X}_i$. At the end of the $i$-th iteration, we add the points from $\hat{X}_i$ to $\hat{X}$. This is illustrated in Figure \ref{fig:1b}. We repeat this until $X$ is empty.

\begin{figure}[h!]
\begin{center}
\subfigure[]{
\includegraphics[height=1.4in,width=1.8in]{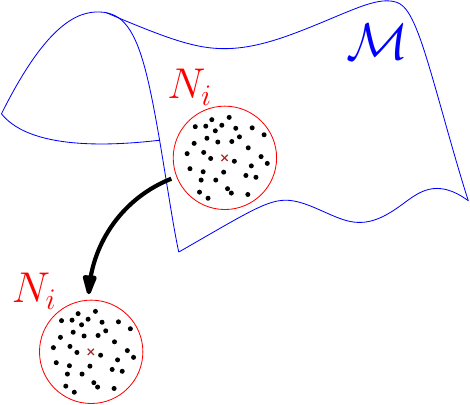}
\label{fig:1a}}
\hspace{.5in}
\subfigure[]{
\includegraphics[height=1.4in,width=1.8in]{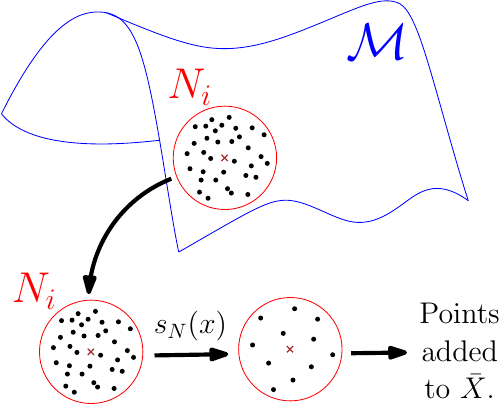}
\label{fig:1b}}
\end{center}
\caption[Illustration of the subsampling algorithm at the $i$-th iteration]{Illustration of the subsampling algorithm at the $i$-th iteration.}
\label{fig:three graphs}
\end{figure}

The details of the subsampling algorithm are in Algorithm \ref{alg1}. Here, $s_{\hat{X}_i}(\bfy_i)$ refers to the local measure function defined in equation (\ref{eq:localmeasure}) of this paper.

\vspace{9pt}\begin{algorithm}[H]\label{alg1}
\algsetup{linenosize=\tiny}
\scriptsize
 \KwData{Initial point set $X$, radius $r$, manifold dimension $d$, threshold $\epsilon$}
 \KwResult{Subsampled point set $\hat{X}$}
 set $\hat{X}=\emptyset$\;
 \While{$X\neq \emptyset$}{
  remove a random point $\bfx_i$ from $X$\;
  find local neighbors $N_i$, with distance $\leq r$ from $\bfx_i$, in $X$\;
  add $\bfx_i$ to $N_i$\;
  find local neighbors $\hat{X}_i$ with distance $\leq r$ from $\bfx_i$ in $\hat{X}$\;
  \While{$N_i \neq \emptyset$}{
  remove a random point $\bfy_i$ from $N_i$\;
  \eIf{$\hat{X}_i= \emptyset$}{
   add $\bfy_i$ to $\hat{X}$ and $\hat{X}_i$\;
   }
   {
  find $s_{\hat{X}_i}(\bfy_i)$\;
  \If{$s_{\hat{X}_i}(\bfy_i)>\epsilon$}{
   add $\bfy_i$ to $\hat{X}$ and $\hat{X}_i$\;
 }
   }
  }
 }
 \caption{Subsampling Algorithm}
\end{algorithm}

In our simulation, the parameters in Algorithm \ref{alg1} can be determined automatically according to the data. The radius $r$ determines the number of neighborhood points. Ideally, we want $r$ to be such that the points on the manifold are related approximately in a linear sense. Empirically, we always set
$
r = p\max_{i,j}\|\bfx_i-\bfx_j\|_2
$
with $p = 0.025$ for simulation. The exact dimensions of the underlying manifold are usually not known. However, these dimensions can be approximated using the methods of \ctn{Estimate_dim_1} and \ctn{Estimate_dim_2}.  The threshold $\epsilon$ is used to control the density of the subsampled set. A large value of $\epsilon$ will translate to a smaller subsampled set,  while a smaller $\epsilon$ value will give a larger subsampled set. In our setting for simulation, we set $\epsilon = 10^{5d-10}$, where $d$ is the estimated dimension of the underlying manifold.

\subsection{The Observation of Subsampling}

Here, we discuss one crucial and interesting observation about the output of the subsampling approach; this discussion requires the definition of an $\epsilon$-separated set.

\bed[$\epsilon$-separated set]
Let $(X,d)$ be a metric space. A subset $A$ is said to be an $\epsilon$-separated set only if, for every $\bfx\neq \bfy\in A$, we have $d(\bfx,\bfy)\geq \epsilon$, where $d$ is the geodesic on the manifold.
\eed

\begin{observation}\label{obser:1}
Given a dense set $X$, the subsampling algorithm will output a subset of points, which is an $\epsilon$-separated set, with $\epsilon$ depending on the threshold of the local measure function.
\end{observation}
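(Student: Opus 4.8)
The plan is to prove the contrapositive form of the separation property: whenever a candidate point $x$ lies within geodesic distance $\epsilon$ of an already-selected point, its local measure $s_N(x)$ must fall below the acceptance threshold, so $x$ is rejected. Since the subsampling algorithm retains a point exactly when its local measure exceeds a fixed threshold $\tau$, establishing such a bound for a suitable $\epsilon=\epsilon(\tau,t)$ will immediately show that the retained set is $\epsilon$-separated.

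First I would record the key linear-algebra inequality. The matrix $\mathbf{\hat{H}}$ is the Gram matrix of the kernel sections $\hat{K}(\cdot,x_i)$ for the Gaussian-type kernel \eqref{eq:localheatkernel}, hence symmetric positive definite on a set of distinct neighbours and invertible. The generalized Rayleigh-quotient identity then gives, for every index $i$,
$$\mathbf{\hat{h}}^T \mathbf{\hat{H}}^{-1} \mathbf{\hat{h}} = \max_{v\neq 0} \frac{(v^T\mathbf{\hat{h}})^2}{v^T\mathbf{\hat{H}} v} \geq \frac{(\mathbf{\hat{h}}_i)^2}{\mathbf{\hat{H}}_{ii}} = \frac{\hat{K}(x,x_i,t)^2}{\hat{K}(x_i,x_i,t)},$$
the inequality being obtained by restricting the maximiser to the coordinate vector $e_i$. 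Choosing $i$ to index a close neighbour converts this lower bound on the quadratic form into the desired upper bound on $s_N(x)$.

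Next I would evaluate the bound using the explicit kernel. Because the logarithm map $E^{-1}_x$ is a radial isometry based at $x$ (Gauss lemma), one has $\|E^{-1}_x(x_i)\|=d(x,x_i)$ exactly, so $\hat{K}(x,x,t)=\hat{K}(x_i,x_i,t)=(4\pi t)^{-m/2}$ and $\hat{K}(x,x_i,t)=(4\pi t)^{-m/2}\exp(-d(x,x_i)^2/4t)$. Substituting into the definition of the local measure,
$$s_N(x) = 1 - \frac{\mathbf{\hat{h}}^T\mathbf{\hat{H}}^{-1}\mathbf{\hat{h}}}{\hat{K}(x,x,t)} \leq 1 - \frac{\hat{K}(x,x_i,t)^2}{\hat{K}(x_i,x_i,t)\,\hat{K}(x,x,t)} = 1 - \exp\!\left(-\frac{d(x,x_i)^2}{2t}\right).$$
Setting $\epsilon := \sqrt{-2t\ln(1-\tau)}$, an already-selected neighbour with $d(x,x_i)<\epsilon$ forces $s_N(x)<\tau$, so $x$ is not added; conversely this identity is exactly what exhibits the dependence of $\epsilon$ on the threshold $\tau$ and the fixed time $t$.

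I expect the main obstacle to be the bookkeeping that makes this contrapositive uniform over the iterative procedure. The bound on $s_N(x)$ only controls neighbours that actually enter the local computation, so one must require the neighbourhood radius to satisfy $r\geq\epsilon$ (equivalently $\tau\leq 1-\exp(-r^2/2t)$) so that a point within $\epsilon$ of $x$ is never missed. One must then argue that $\epsilon$-separation is preserved as the selected set grows: if every point accepted after $x$ sees $x$ inside its neighbourhood whenever it is within $\epsilon$, it is rejected, so separation holding at each insertion time propagates to a global pairwise separation of the final output. The remaining points — positive-definiteness and hence invertibility of $\mathbf{\hat{H}}$ on a dense set of distinct samples, and the observation that discarding the $1+o(1)$ factor is harmless since the algorithm is defined directly through $\hat{K}$ — require only brief justification.
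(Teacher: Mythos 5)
Your core computation is correct, and it takes a genuinely different --- in fact substantially stronger --- route than the paper, whose justification of this observation is purely heuristic: the paper only asserts that a point close to an accepted point is ``expected'' to have small measure and a far point a large one, with no quantitative link between the threshold and the separation. You make that link explicit: the variational identity $\mathbf{\hat h}^T\mathbf{\hat H}^{-1}\mathbf{\hat h}=\max_{v\neq 0}\,(v^T\mathbf{\hat h})^2/(v^T\mathbf{\hat H}v)$ restricted to a coordinate vector, combined with the Gauss-lemma evaluation of the kernel \eqref{eq:localheatkernel} (so that $\hat K(x,x,t)=\hat K(x_i,x_i,t)=(4\pi t)^{-m/2}$ and $\hat K(x,x_i,t)=(4\pi t)^{-m/2}e^{-d(x,x_i)^2/4t}$), gives the clean bound $s_N(x)\leq 1-\exp\left(-d(x,x_i)^2/(2t)\right)$ and hence the closed-form separation constant $\epsilon=\sqrt{-2t\ln(1-\tau)}$. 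What this buys is a verifiable rejection criterion and an explicit dependence of $\epsilon$ on the threshold $\tau$ and the time parameter $t$, where the paper only asserts such a dependence exists; you also correctly identify that only the rejection direction (close implies rejected) is needed for $\epsilon$-separation, so the paper's unproven converse (far implies accepted) can be ignored. One caveat: your bookkeeping fix $r\geq\epsilon$ does not fully close the gap you flag, because in the algorithm the accepted-point set $\bar X_i$ is gathered within radius $r$ of the randomly removed point $x_i$, not of the candidate $x_j$; an accepted point within $\epsilon$ of $x_j$ but at distance greater than $r$ from $x_i$ still escapes the local computation, so one actually needs $\bar X_i$ to be collected within radius $r+\epsilon$ of $x_i$ (or neighbourhoods recomputed per candidate). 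This residual gap is inherited from the algorithm as written and is present, unacknowledged, in the paper's own informal argument as well.
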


With the subsampling algorithm, we pick points that have a measure larger than a certain tolerance. Intuitively, given a subset of accepted points, $X$, we compute the measure of a new point $\bfx$ based on $X$ as follows: if $\bfx$ is close to some point $\bfy\in X$, i.e.  $d(\bfx,\bfy)<\epsilon$, we expect its measure to be small, which means that this point would not significantly perturb the eigenvalues. Thus, we will not add $\bfx$ to $X$. On the other hand, if $\bfx$ is far away from all points $\bfy\in X$, i.e.  $d(\bfx,\bfy)\geq\epsilon$, we expect it to cause a large perturbation of the eigenvalues of the Laplace–Beltrami operator, and we will therefore add $\bfx$ to $X$. We will thus arrive at a subset of the input set $X$ that is an $\epsilon$-separated set. An example is provided below in the form of Figure \ref{fig:randsample}, which shows a random sampling of the unit sphere. The subsampled algorithm is applied with three different starting points, and the results are shown in Figures \ref{fig:subsampledset1}, \ref{fig:subsampledset2}, and \ref{fig:subsampledset3}. It can be observed that the subsampled points are spaced apart, unlike in the original dense sampled set.

\begin{figure}[h!]
\begin{center}
\subfigure[]{
\includegraphics[width=1.22in]{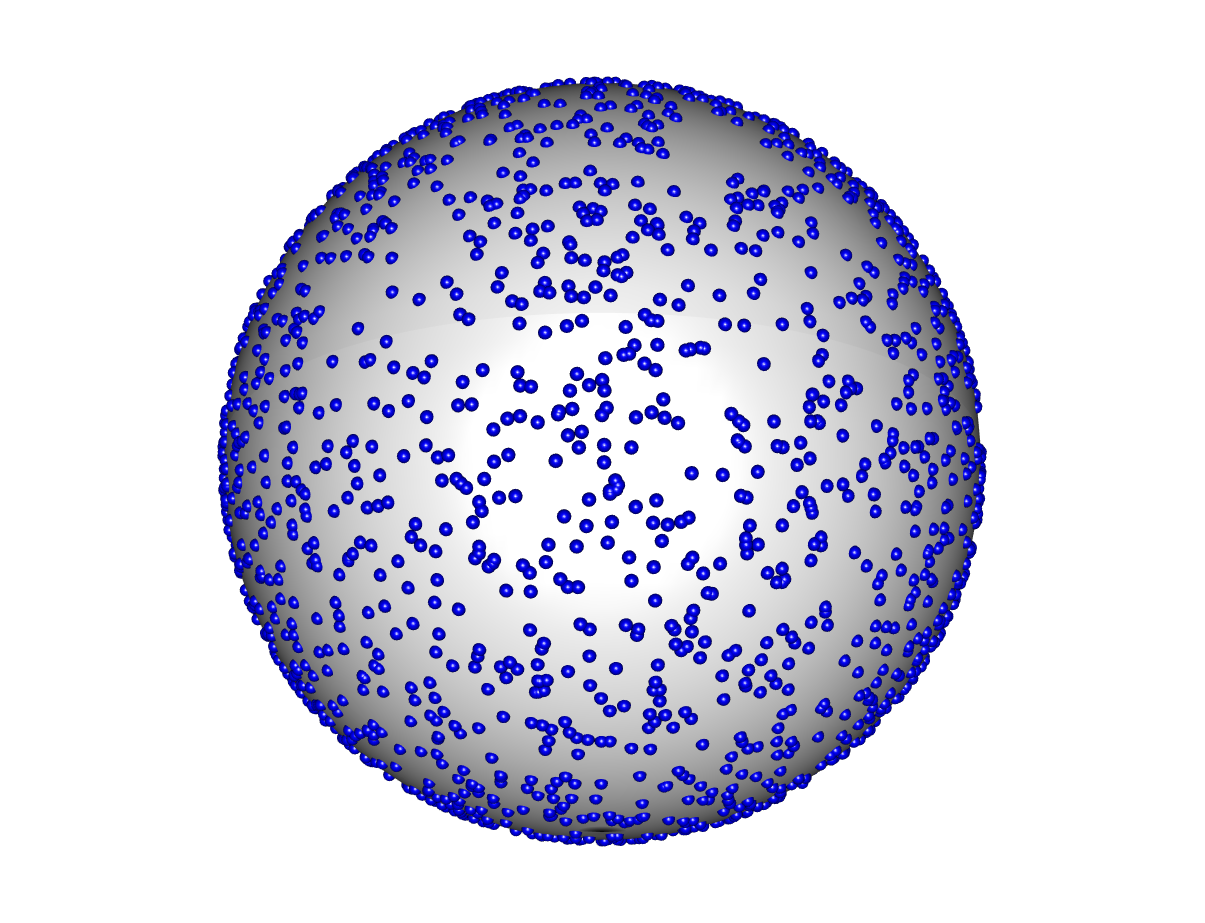}
\label{fig:randsample}}
\subfigure[]{
\includegraphics[width=1.22in]{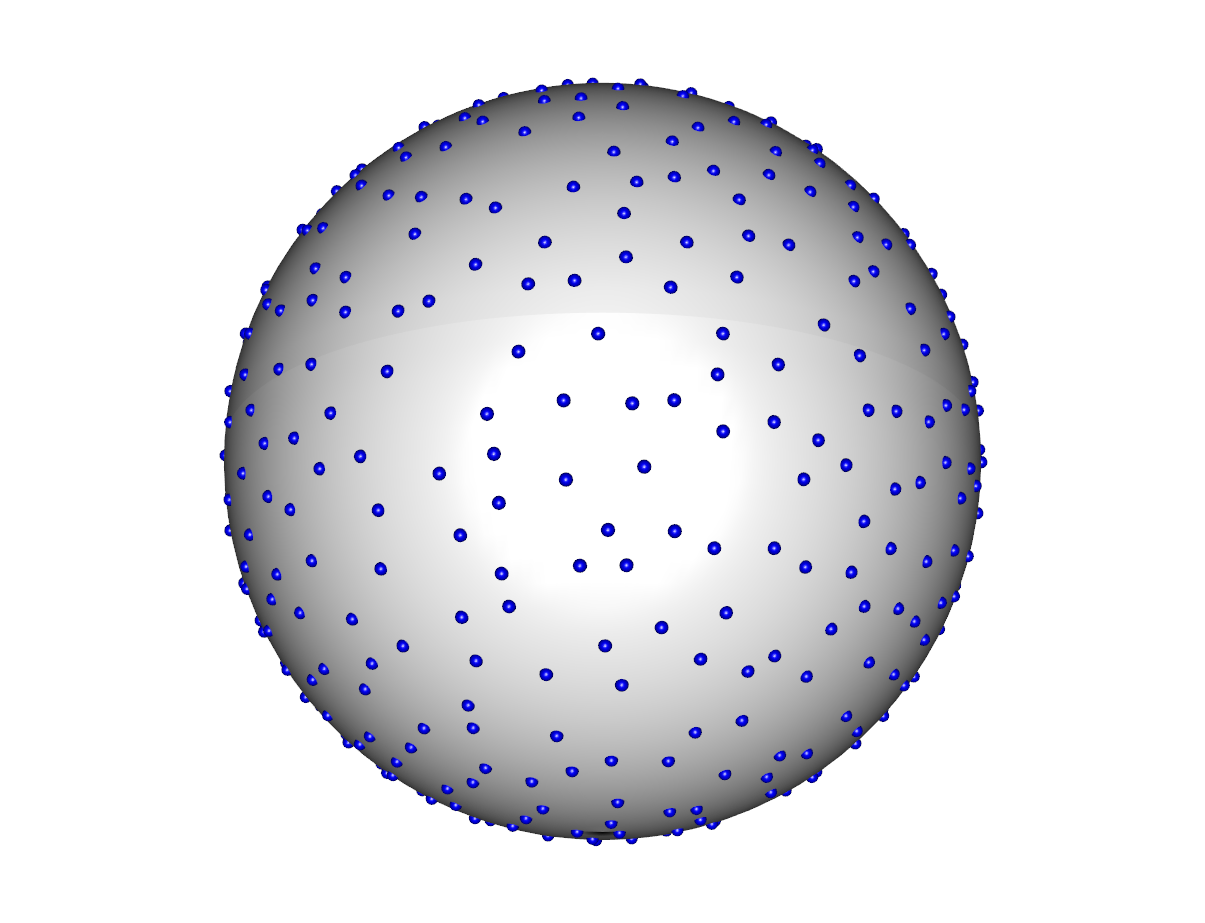}\label{fig:subsampledset1}}
\subfigure[]{
\includegraphics[width=1.22in]{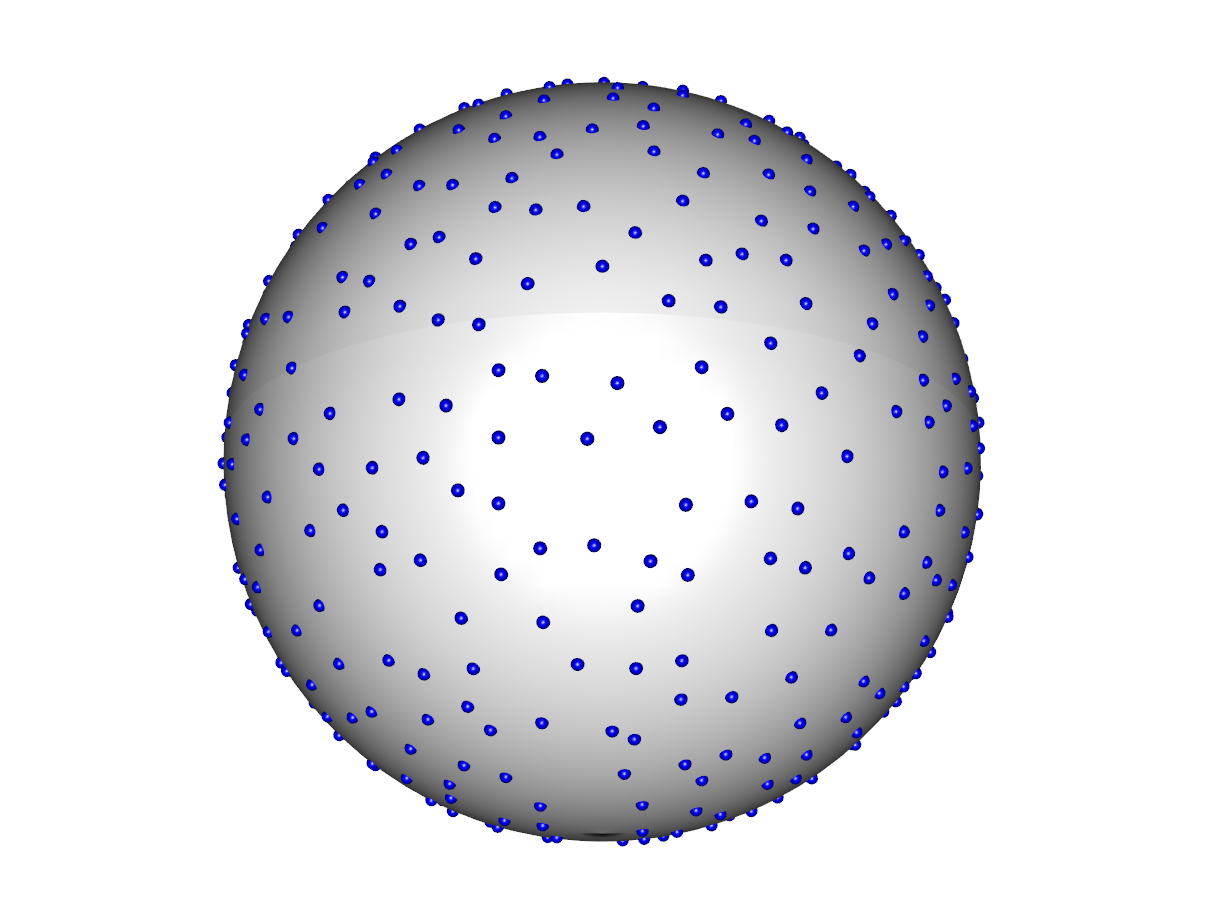}\label{fig:subsampledset2}}
\subfigure[]{
\includegraphics[width=1.22in]{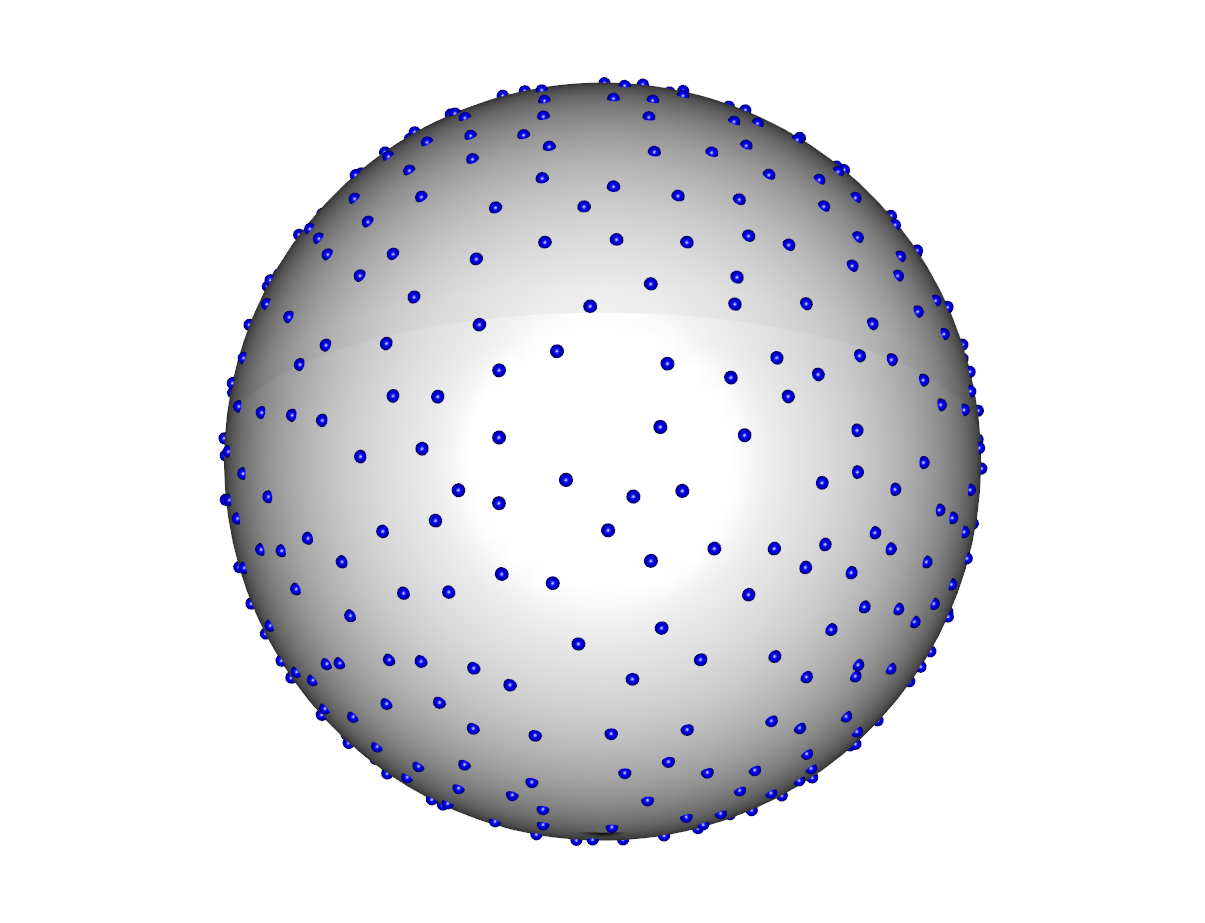}\label{fig:subsampledset3}}
\end{center}
\caption{(a) Random sampling of a sphere. (b)–(d) Subsampled set with different starting point.}
\label{fig:subsphere}
\end{figure}

With a different starting point, the algorithm outputs a different subsampled set, as shown in Figure \ref{fig:subsphere}. However, each subsampled set will still contain the important geometric properties of the manifold. This can be observed in Figure \ref{fig:subsphere}, which shows that the different subsampled sets still sample the entire sphere. The output that this greedy algorithm provides is within the purview of our research, as we aim to reduce the computational complexity by working with a smaller subsampled set for manifold fitting.


\section{Manifold Fitting}\label{Sect3}
In this section, we will discuss the manifold-fitting step, which involves MLS. We will define the MLS for function approximation and discuss how MLS is used in a local setting for surface approximation. 
\subsection{MLS for Function Approximation}
Let $\{\bfx_i\}_{i=1}^I$ be a set of sample points in $\mathbb{R}^m$ and let $\{f(\bfx_i)\}_{i=1}^I$ be the corresponding set of sampled function values at those points for some function $f:\mathbb{R}^m\rightarrow \mathbb{R}$. We want to approximate the function $f$ with a smooth function $\hat f$. The MLS approach was developed to find a function such as $\hat f$. In this approach, $\hat f=\hat f_q$, with $\hat f_q$ being a $q$-degree polynomial function. The $q$-degree polynomial approximation of $f$ at a point $\bfx\in\mathbb{R}^m$ using MLS is defined by $\hat f_q(\bfx)$ such that 
$$\hat f_q(\bfx)=\argmin_{\hat f\in \Pi_q^m} \sum_{i=1}^I (\hat f(\bfx_i)-f(\bfx_i))^2w(\|\bfx-\bfx_i\|),$$
where $w(\cdot)$ is a decreasing non-negative weight function and $\Pi_q^m$ is the space of polynomials of degree $q$ in $\mathbb{R}^m$.

Here we introduce an equivalent definition for the same MLS approximation that can be found in \ctn{approxpower_MLS}. We introduce this equivalent definition as it will enable an easier error analysis in Subsection \ref{sect:errorMLS}.

\ctn{approxpower_MLS} defined MLS in terms of approximating a bounded linear functional, $L$. Within the scope of this paper, the bounded linear functional is defined as $f$ itself, i.e. $L(f)=f(\bfx)$, and we want to find the $\hat L$ that approximates $L$. We use the same notations as in the previous definition of the MLS. The approximation is then defined as
$$\hat L(f)=\sum_{i=1}^I f(\bfx_i)a_i,$$
where $\{a_i\}$ minimizes
$$Q=\sum_{i=1}^I w(\|\bfx-\bfx_i\|)a_i^2,$$
subject to the constraints
$$\sum_{i=1}^I a_ip_j(\bfx_i)=p_j(\bfx), \quad j=1,\cdots,J,$$
where $\{p_j\}_{j=1}^J$ span $\Pi_q^m$.

\subsection{MLS Local Projection}
Let $X$ be in $\mathbb{R}^n$ and assume that $X$ lies close to a manifold of dimensions $m<n$. Let $N_r(\bfx)=\{\bfy\ :\ \bfy\in X\ \text{and}\ \|\bfy-\bfx\|\leq r\} = \{\bfy_i\}_{i=1}^I$, illustrated in Figure \ref{fig:2a} below by the black points. The aim is to find a smooth surface that approximates the points in $N_r(\bfx)$ and map $N_r(\bfx)$ to this surface.

We apply PCA to $N_r(\bfx)$ and use the first $m$ principal components for the local coordinate system shown in Figure \ref{fig:2b}. We map $N_r(\bfx)$ to the $m$-dimensional local coordinate system. Let the mapped points be $N_{r,proj}(\bfx) = \{\bfy_{i,proj}\}_{i=1}^I$, illustrated in blue in Figure \ref{fig:2c}. We then use MLS for function approximation to obtain a polynomial $p(\cdot) : \mathbb{R}^m \rightarrow \mathbb{R}^n$, which approximates the function $f$ where $f(\bfy_{i,proj})=\bfy_i$, as shown in Figure \ref{fig:2d}.

\begin{figure}[h!]
\begin{center}
\subfigure[]{
\includegraphics[height=1.1in,width=1.05in]{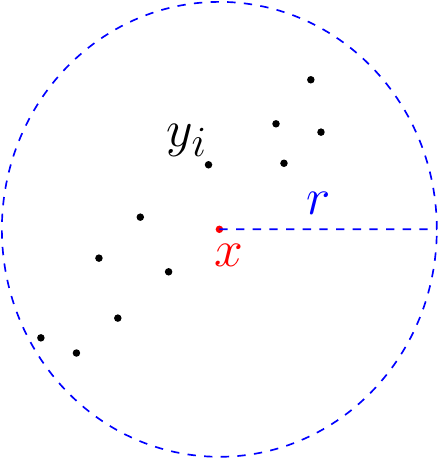}
\label{fig:2a}}
\hspace{0.1in}
\subfigure[]{
\includegraphics[height=1.1in,width=1.05in]{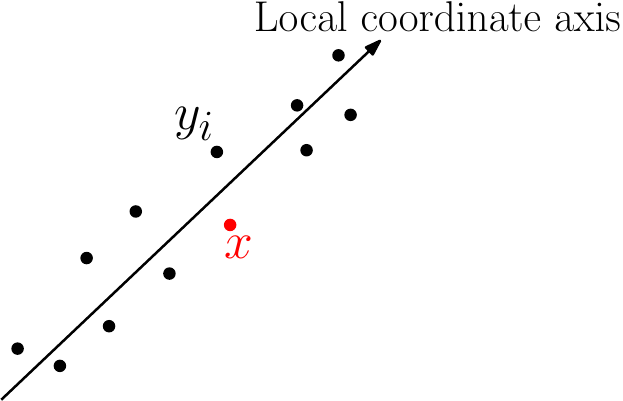}
\label{fig:2b}}
\hspace{0.1in}
\subfigure[]{
\includegraphics[height=1.1in,width=1.05in]{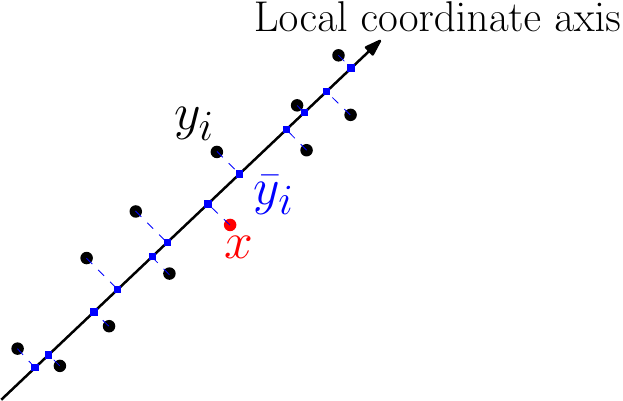}
\label{fig:2c}}
\hspace{0.1in}
\subfigure[]{
\includegraphics[height=1.1in,width=1.05in]{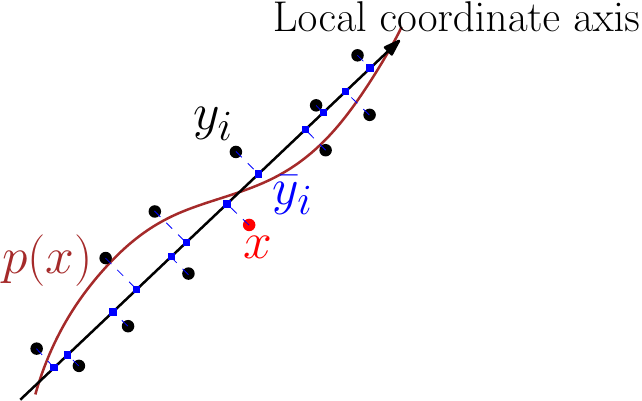}
\label{fig:2d}}
\end{center}
\caption{Illustration of the MLS for local projection.}
\end{figure}

\subsection{The Algorithm of Manifold Fitting}

\vspace{9pt}\begin{algorithm}[H]\label{alg2}
\algsetup{linenosize=\tiny}
\scriptsize
 \KwData{Subsampled point set $S$, number of neighbors $k$, polynomial degree $q$ }
 \KwResult{Projected subsampled set $\hat S$}
 set $\hat S=\emptyset$\;
 \For{each $\bfx$ in $S$}{
 find $k$ nearest neighbors, $N(\bfx)$, of $\bfx$ in $S$\;
 get a local coordinate system for $N(\bfx)$ using PCA\;
 find the representation for $\bfx$ and $N(\bfx)$ using the local coordinates\;
 apply MLS to find a vector polynomial function of degree $m$ that approximates the mapping from the local coordinates to $N(\bfx)$\;
 find $\bfx_{proj}$, which is the projection of the local coordinates of $\bfx$ using the polynomial function\;
 add $\bfx_{proj}$ to $\hat S$;
 }
 \caption{Manifold Fitting of $X$}
\end{algorithm}\vspace{9pt}

\phantom{a}

Here, we provide details of the MLS algorithm; see Algorithm \ref{alg2}. The parameters $k$ and $q$ can be determined from the distribution of the data set. The degree $q$ refers to the degree of the polynomial function used during the MLS algorithm. A higher value of $q$ would give a better MLS estimate. However, by increasing $q$, we would also increase the computational complexity as it would lead to a larger degree of freedom (i.e. a larger number). In our simulation, a small $q$, such as $1$ or $2$, is sufficient to obtain a relatively accurate manifold-fitting result. As for $k$, we suggest to choose it such that the neighborhood would contain about $2$ to $3$ percent of the total number of points in $S$.



\section{Error Bounds for Manifold Fitting}\label{sect:errorbounds}
In this section, we present the non-asymptotic analysis for the manifold-fitting step in Subsection \ref{sect:PCAerror} and Subsection \ref{sect:errorMLS}. The former discusses the error bounds for tangent-space approximation using PCA, and the latter discusses the error bounds for the MLS.

\subsection{Error Bounds for Tangent-Space Approximation}\label{sect:PCAerror}
This subsection provides a non-asymptotic analysis of tangent-space perturbation in accordance with the main result from \ctn{PCA_tangentestimation2}. The analysis of the perturbation is based on the Frobenius norm on the difference between the projection matrices. From this bound, we are able to derive an upper bound for the principal curvature and the noise level such that the analysis is well defined. We are also able to derive a range for the neighborhood radius in the tangent space. 

We first set up the framework needed for the analysis and state the main results for the perturbation of the tangent space approximated with PCA. Following this, we analyze the error bound and give the conditions under which the error bound makes sense. The conditions are given in terms of the curvature, noise, and neighborhood radius.

Consider $N_{\epsilon}(\bfy)$ to be the set of points that is of distance  $\epsilon$ away from $\bfy$. For points lying directly on the manifold in $N_{\epsilon}(\bfy)$, we can uniquely represent them as 
$$[\bfx, f_1(\bfx), \ldots, f_{n-m}(\bfx)]^T.$$
Here, $\bfx=[x^{(1)},\ldots, x^{(m)}]^T$ denotes the coordinates of the projection of the point onto $T_\bfy\mathcal{M}$. In the above representation, the point $\bfy$ is taken to be the reference point. We further assume that the embedding $f_l,\ l=1,\ldots,n-m$ is $\mathcal{C}^a, a>2$ and define $\mathcal{K}_{l,1},\ldots,\mathcal{K}_{l,m}$ as the principal curvatures of the hypersurface
$$\mathcal{S}_l=\{[\bfx\ f_l(\bfx)]^T : \bfx\in T_y\mathcal{M}\} \in \mathbb{R}^{m+1}, \quad l=1,\cdots,n-m.$$
With second-order Taylor expansion, we obtain
$$f_l(\bfx)=\frac{1}{2}(\mathcal{K}_{l,1}(x^{(1)})^2 + \cdots + \mathcal{K}_{l,m}(x^{(m)})^2) + O(\|\bfx\|_2^3), \quad l=1,\ldots, n-m.$$
We truncate the Taylor expansion and consider a quadratic approximation,
\beq\label{eq:quad_embed}
f_l(\bfx)=\frac{1}{2}(\mathcal{K}_{l,1}(x^{(1)})^2 + \cdots + \mathcal{K}_{l,m}(x^{(m)})^2), \quad l=1,\ldots, n-m,
\eeq
to describe the manifold locally.
\begin{remark}
In our analysis, we truncate the Taylor expansion and consider the manifold to be an exact quadratic embedding of the form \eqref{eq:quad_embed}. While the manifold in real-life problems is not an exact quadratic embedding, for our analysis we assume that considering a quadratic embedding is sufficiently general. Furthermore, we note that the analysis in \ctn{PCA_tangentestimation} for the noiseless case shows that there is no loss in accuracy for a general case between a smooth embedding and a quadratic embedding.
\end{remark}
We assume that there are $I$ noiseless points,
$$\{\bfy_i\}_{i=1}^I=\Big\{ [x_i^{(1)}, x_i^{(2)}, \ldots, x_i^{(m)}, f_1(\bfx_i), \ldots, f_{n-m}(\bfx_i) ]^T\Big\}_{i=1}^I,$$
in $N_{\epsilon}(\bfy)$. Here, $\bfx_i=[x_i^{(1)}, x_i^{(2)}, \ldots, x_i^{(m)}]^T$ represents the coordinates of the projection of $\bfy_i$ on $T_\bfy S$ for $i=1,\ldots, I$. We define the matrices $\bfX_{noiseless}, \bfE, \bfL$, and $\bfS$ where the $i$-th columns are
\begin{gather*}
(\bfX_{noiseless})_i=\bfy_i, \qquad
\bfE_i=[e_i^{(1)},\cdots,e_i^{(n)}]^T,\\
\bfL_i=[x_i^{(1)}, x_i^{(2)}, \ldots, x_i^{(m)},0, \ldots, 0 ]^T,\qquad
\bfS_i=[0, \ldots, 0, f_1(\bfx_i), \ldots, f_{n-m}(\bfx_i) ]^T,
\end{gather*}
for $i=1,\cdots,I$. The noise matrix is represented by $\bfE$ and we assume that the noise, $\bfE_i, i=1,\cdots I$, for each point, $\bfy_i$, is i.i.d. and follows a multivariate Gaussian distribution. For simplicity, we consider the Gaussian distribution to have zero mean and covariance $\sigma^2\bfI$, i.e. $\bfE_i \sim \mathcal{N}(\bold{0},\sigma^2\bfI), i=1,\cdots,I.$ 

The sample points to be considered are then $\bfX=\bfX_{noiseless}+\bfE=\bfL+\bfS+\bfE$. For any matrix $\bfZ$, let $\bfC_\bfZ$ represent the centered version of $\bfZ$ with respect to the columns. Then, the covariance matrix is given by

\begin{align*}
\frac{1}{I}\mbfX\mbfX^T&=\frac{1}{I}(\mbfL+\mbfS+\mbfE)(\mbfL+\mbfS+\mbfE)^T\\
&=\frac{1}{I}\mbfL\mbfL^T+\Theta,
\end{align*}
where 
$$\Theta=\frac{1}{I}\Big(\mbfS\mbfS^T + \mbfE\mbfE^T +\mbfL\mbfS^T + \mbfS\mbfL^T + \mbfL\mbfE^T + \mbfE\mbfL^T + \mbfS\mbfE^T + \mbfE\mbfS^T\Big) $$
is the perturbation term when we want to retrieve the tangent space at $\bfy$. Working directly with $\mbfL\mbfL^T$  will allow us to retrieve the true tangent space at the point $\bfy$. However, we are unable to do so because of the perturbation term, and we have only $\mbfX\mbfX^T$ to work with.

We will analyze the error caused by the perturbation term in terms of the Frobenius norm of the difference between the orthogonal projectors. Let
$\bfP$ represent the orthogonal projector for the true tangent space associated with $\mbfL\mbfL^T$ and $\hP$ be the projector for the approximated one from $\mbfX\mbfX^T$. Let $r$ represent the radius of the neighborhood on the tangent plane, $T_\bfy\mathcal{M}$, that contains the $I$ points in $N_{\epsilon}(\bfy)$ projected onto $T_\bfy\mathcal{M}$. We define constants $\mathcal{K}$ and $\mathcal{K}^{(+)}$ with respect to the principal curvatures $\mathcal{K}_{i,j}, i=1,\cdots,n-m$ and $j=1,\cdots,m$.
\begin{gather*}
\mathcal{K}=\Bigg[ \sum_{i=1}^{n-m}\sum_{j=1}^{n-m} \Bigg((m+1)\sum_{k=1}^m\mathcal{K}_{i,k}\mathcal{K}_{j,k} - \sum_{\substack{l_1,l_2=1\\ l_1\neq l_2}}^m \mathcal{K}_{i,l_1}\mathcal{K}_{j,l_2}  \Bigg)^2\ \Bigg]^{\frac{1}{2}}\quad \text{and}\\
\mathcal{K}^{(+)} =\Bigg[ \sum_{i=1}^{n-m} \Bigg( \sum_{j=1}^m | \mathcal{K}_{i,j} |^2 \Bigg)^2\ \Bigg]^{\frac{1}{2}}.
\end{gather*}
The constant, $\mathcal{K}$, can be seen as the expectation of the norm of curvature covariance, and $\mathcal{K}^{(+)}$ is a positive term that is used when positive curvature terms are required.
From the main theorem of \ctn{PCA_tangentestimation2}, we have
\beq\label{eq:tangentapprox_main}
Pr\Bigg( \|\bfP-\hP\|_F\leq\frac{2\sqrt{2}\beta}{\delta} \Bigg) \geq1-9e^{-\xi_1^2}-2me^{-\xi_2^2},
\eeq
where $$\beta=\frac{1}{\sqrt{I}}\Bigg[\mathcal{K}^{(+)}r^3\upsilon(\xi_1) + \sigma\sqrt{m(n-m)}\eta(\xi_1,\xi_2) + \frac{1}{\sqrt{I}} \zeta_{\text{numer}}(\xi_1) \Bigg],$$
$$\delta=\frac{r^2}{m+2} - \frac{\mathcal{K}r^4}{2(m+2)^2(m+4)}-\sigma^2\big(\sqrt{m}+\sqrt{n-m}\big)-\frac{1}{\sqrt{I}}\zeta_{\text{denom}}(\xi_1,\xi_2)$$
and $\xi_1$ and $\xi_2$ are probability constants that depend on the dimension of the ambient space and manifold, and the number of points, $I$.

\begin{remark}
The functions, $\upsilon(\xi_1),\eta(\xi_1,\xi_2),\zeta_{\text{numer}}(\xi_1)$, and  $\zeta_{\text{denom}}(\xi_1,\xi_2)$, that appear in the terms $\beta$ and $\delta$ are complicated functions that depend on the probability constants $\xi_1$ and $\xi_2$. They are not presented in this paper, but \ctn{PCA_tangentestimation2} may be referred to for their exact forms.
\end{remark}
From the upper bound given in \eqref{eq:tangentapprox_main}, we are able to derive an informal error bound,
\beq\label{eq:PCA_bound}
\|\bfP-\hP\|_F
\leq 
\frac{\frac{2\sqrt{2}}{\sqrt{I}}\Big[\mathcal{K}^{(+)}r^3 + \sigma\sqrt{m(n-m)}\Big( \sigma + \frac{r}{\sqrt{m-2}} + \frac{\mathcal{K}^{1/2}r^2}{(m+2)\sqrt{2(m+4)}}\Big)\Big]}
{\frac{r^2}{m+2} - \frac{\mathcal{K}r^4}{2(m+2)^2(m+4)}-\sigma^2\big(\sqrt{m}+\sqrt{n-m}\big)},
\eeq
that is sufficient for an analysis of how the curvature, noise levels, and radius affect the tangent-space approximation. The informal bound is derived by momentarily neglecting the probabilistic constants. Intuitively, we should obtain a better tangent-space approximation when we have more sample points. This is demonstrated in \eqref{eq:PCA_bound}, where having more points implies that we have a larger $I$, which would make a smaller upper bound. Furthermore, it can be observed that, for \eqref{eq:PCA_bound} to be well defined, the upper bound on the right-hand side is required to be positive. This gives us the necessary condition in which
\beq\label{EQ:BOUND_CONDITION1}
\mathcal{K}\sigma^2<\frac{m+4}{2\big(\sqrt{m}+\sqrt{n-m}\big)},
\eeq
and
\begin{align}
r^2\in&\Bigg(\frac{(m+2)\big[(m+4)-\sqrt{(m+4)^2-2\mathcal{K}\sigma^2(m+4)(\sqrt{m}+\sqrt{n-m})}\ \big]}{\mathcal{K}},\nonumber\\
&\qquad\frac{(m+2)\big[(m+4)+\sqrt{(m+4)^2-2\mathcal{K}\sigma^2(m+4)(\sqrt{m}+\sqrt{n-m})}\ \big]}{\mathcal{K}}\Bigg).\label{EQ:BOUND_CONDITION2}
\end{align}
The derivations of conditions \eqref{EQ:BOUND_CONDITION1} and \eqref{EQ:BOUND_CONDITION2} are presented in Section S2 of the supplementary materials.

Condition \eqref{EQ:BOUND_CONDITION1} gives an upper bound for the curvature and variance of the noise. From condition \eqref{EQ:BOUND_CONDITION1}, the term $\mathcal{K}\sigma^2$ is required to be smaller than a term that depends on the dimension of the ambient space and manifold. Intuitively, this makes sense as, for any point on the manifold, if the curvature at that point or the variance of the noise in the neighborhood is large, we expect to be unable to obtain an accurate approximation of the tangent space. If the curvature at a point is high, locally a smaller radius is required so that the points behave in a close-to-linear sense. Furthermore, if the variance of the noise is large, the noise might play a larger role in the tangent-space approximation, and we might not be able to obtain an accurate approximation.

Condition \eqref{EQ:BOUND_CONDITION2} gives a range for $r$ such that the right-hand side of \eqref{eq:PCA_bound} is positive. The condition shows that $r$ cannot be a large value as it will cause the error bound to be negative and thus not well defined. Therefore, with a large $r$, we may not be able to obtain an accurate approximation of the tangent space. Condition \eqref{EQ:BOUND_CONDITION2} is somewhat related to condition \eqref{EQ:BOUND_CONDITION1} in the sense that, if $r$ is large, the curvature at that point will have a larger effect on the tangent-space approximation, thereby making the approximation inaccurate, as described above. On the other hand, $r$ cannot be made too small as this will also make the denominator of \eqref{eq:PCA_bound} small, which then gives a large error bound. Thus, the user has to choose $r$ appropriately; a recommendation is provided in Subsection \ref{subs:algorithm1}. We also present a numerical simulation in Subsection S4.3 in the supplementary materials to determine the effect of the radius on the accuracy of the output.

\subsection{Error Bounds for MLS}\label{sect:errorMLS}

This subsection gives the pointwise error analysis for the MLS approximation for sample points. We follow the modality of \ctn{MLS_errbound} to analyze the MLS approximation error, i.e. $|f(\bfx)-\hat f_q(\bfx)|$. \ctn{MLS_errbound} analyze the error bounds based on the derivatives, while we are interested in the pointwise error of the functions. Thus, we follow the proofs closely and work out the pointwise error for $|f(\bfx)- \hat f_q(\bfx)|$ in a similar fashion.

We first set up the framework required and then give the pointwise error bound. We use the multi-index notation $\alpha=(\alpha_1,\cdots,\alpha_m)$, $\alpha!=\alpha_1!\cdots\alpha_m!$, $|\alpha|=\alpha_1+\cdots+\alpha_m$, and $\bfx=[x^{(1)},\cdots,x^{(m)}]^T$, $\bfx^\alpha = (x^{(1)})^{\alpha_1}\cdots(x^{(m)})^{\alpha_m}$. We then define the derivative $D^{\alpha}:=\partial^{\alpha_1}_{x^{(1)}} \cdot\ldots \partial^{\alpha_m}_{x^{(m)}}$. 

Recall that in MLS we approximate the function $f(\bfx)$ with a $q$-degree polynomial function, $\hat f_q(\bfx)$. We let $\{p_{\alpha}\}_{|\alpha|\leq m}$ be the standard basis for $\Pi_q^m$ shifted to $\bfx$, i.e. $p(\cdot)=(\cdot - \bfx)^\alpha$, where $\bfx$ refers to the reference point for MLS approximation. We let $\bfV$ be the generalized Vandermonde matrix, i.e. $\bfV_{i,\beta}=p_{\beta}(\bfx_i)$.

\begin{theorem}\label{THM:MLS_POINTERR}
Let $\hat f_q$ be the approximated $q$-degree polynomial for the function $f^*$ defined by the points $\{(\bfx_i,f^*(\bfx_i))\}_{i=1}^I, (\bfx_i,f^*(\bfx_i))\in \mathbb{R}^m\times \mathbb{R}$, and $f^*(\bfx_i)=f(\bfx_i)+e_i$, where $e_i \sim N(0,\sigma^2)$ and $f$ is the true function. Assume that $f\in \mathcal{C}^{m+1}(\mathbb{R}^m)$ and $D^{\alpha}f(\bfx) < C_{\alpha}$ for $|\alpha|\leq m+1$. Then, for any $k$, with probability at least $1- \frac{2}{\sqrt{2\pi}k}\exp\{-\frac{k^2}{2}\}$,
\begin{align*}
|\hat f_q(\bfx)-f(\bfx)| &\leq O(r^{m+1}) + |Ck\sigma\sqrt{I}|,
\end{align*}
where $C=\max_{i=1}^I \frac{\det ((\bfV^T\bfW\bfV)_{1\leftarrow  (\bfV^T\bfW)_i})}{\det (\bfV^T\bfW\bfV)}$ and $(\bfV^T\bfW\bfV)_{1\leftarrow  (\bfV^T\bfW)_i}$ represents the matrix $\bfV^T\bfW\bfV$, with the first column being replaced by the $i$-th column of $\bfV^T\bfW$, and $\bfW=diag\big(w(\|\bfx-\bfx_1\|),\cdots,w(\|\bfx-\bfx_I)\big)$.
\end{theorem}
The proof can be found in Section S3 of the supplementary materials.
\begin{remark}
We assume that the data set $X = \{\bfx_1,... , \bfx_I\}$ is normalized, that is, $\max_{1\leq i,j\leq I} \|\bfx_i-\bfx_j\| = 1$. Following our recommendation of the parameter setup in Section \ref{Sect2}, we will have $r = 0.025$, which means that the term $O(r^{m+1})$ can be ignored for a large dimension $m$. In the second term $|Ck\sigma\sqrt{I}|$, $C$ and $\sigma$ are intrinsic to the data, and the size of the upper bound is actually controlled by $I$. Fortunately, our subsampling step reduces $I$ while maintaining the underlying structure of the manifold, thus providing a smaller upper bound for the manifold fitting. It is also worth noting that the upper bound of the fitting error provided in Theorem 1 is for illustration purposes only. In practice, the MLS algorithm with a subsampling step provides fairly accurate results for the manifold fitting; see the ablation experiment in Subsection \ref{subsect:abl}. 
\end{remark}


\section{Numerical Results}\label{Sect5}

This section present some numerical results from our method with various low-dimensional manifolds and a real-world data set. The low-dimensional manifolds contained a one-dimensional helix, a six-folded curve, and a two-dimensional Swiss roll, all of which lay in three-dimensional space. We first demonstrate the benefits of the subsampling step through the ablation experiments in Subsection \ref{subsect:abl}. In Subsection \ref{sect:numercal1d}, we compare the results from our algorithm with those from other methods, including local PCA, Isomap, and LLD.  Finally, we adapt our algorithm for the denoising of facial images.

\subsection{Ablation Experiments for Subsampling}\label{subsect:abl}
We execute the MLS method on three low-dimensional manifolds with and without the subsampling step. The results of the manifold fitting are represented in three formats: the mean square error (MSE), the smoothness of the reconstructed manifold, and the CPU time. 

Table \ref{tab:ablation} shows the MSE and CPU time of the MLS algorithm with and without subsampling. For the MLS with the subsampling step, we counted the total time for both the subsampling step and the MLS step. As the table indicates, with the subsampling step performed, the time required for the manifold fitting was significantly reduced for all the three low-dimensional manifolds, which was consistent with our expectation. Meanwhile, subsampling helped the MLS algorithm to obtain a manifold with lower MSE. This was mainly because subsampling allows for a more evenly distributed set of samples.

\begin{table}[htbp]
  \centering
   \resizebox{0.6\textwidth}{!}{ 
    \begin{tabular}{c|cc|cc}
    \hline\hline
    \multirow{2}[4]{*}{Manifold} & \multicolumn{2}{c|}{MSE} & \multicolumn{2}{c}{time (s)} \\
\cline{2-5}   & S & NS & S & NS \\
    \hline
    helix & $4.0\times 10^{-3}$ & $5.4\times 10^{-3}$ & 23.4  & 72.3 \\
    six-folded & $4.7 \times 10^{-5}$ & $6.0 \times 10^{-5}$ & 14.2  & 37.8 \\
    Swiss roll & $2.8 \times 10^{-2}$ & $3.1 \times 10^{-2}$ & 14.9  & 52.7 \\
    \hline\hline
    \end{tabular}}%
    \caption{The result of the ablation experiments for subsampling (S) and non-subsampling (NS).}
  \label{tab:ablation}%
\end{table}%

\begin{figure}[h]
    \includegraphics[width = 1\linewidth, height = 0.28\linewidth]{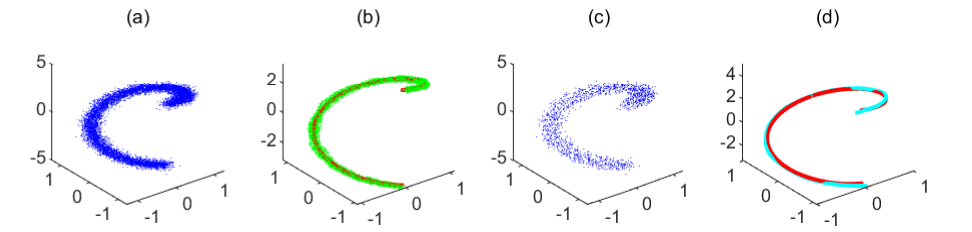}        
    \caption{Ablation experiments for subsampling on helix. (a) Sample points with noise. (b) The result from the MLS algorithm without subsampling (green) and the noiseless sample points (red). (c) The anchor points obtained by subsampling. (d) The result from the MLS algorithm with subsampling (teal) and the noiseless sample points (red).}
    \label{fig:ablationhelix}
\end{figure}

In addition to the observations above, we are also concerned with the smoothness of the obtained manifolds. The helix results are shown in Figure \ref{fig:ablationhelix}, and the results from the other two manifolds may be found in the supplementary materials. The MLS algorithm without the subsampling step yields more concentrated points than the original data points but does not yield a smooth curve. In contrast, the MLS algorithm with the subsampling step yields a smooth curve that is quite close to the noiseless curve. These phenomena suggest that the subsampling step improves not only the efficiency but also the performance of the MLS method.

As the results above indicate, subsampling can be very helpful in improving the performance of manifold fitting, because of the ability of subsampling to obtain $\epsilon$-separated data sets. Although rigorous proof is not available, we still demonstrate this with numerical experiments.
Figure \ref{fig:subsampling} shows the histogram of the distance between each data point and its nearest point. After subsampling, these distances have a larger lower bound, which validates Observation \ref{obser:1} in Section \ref{Sect2}.
\begin{figure}[h]
    \includegraphics[width = 1\linewidth, height = 0.25\linewidth]{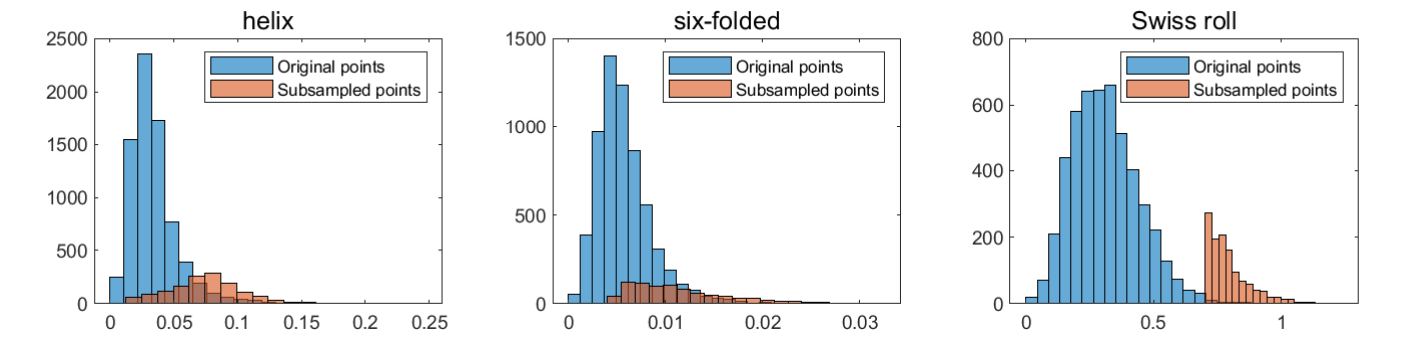}     
    \caption{Histogram of the distance between each point and its nearest point.}
    \label{fig:subsampling}
\end{figure}

\subsection{Comparison of low-dimensional manifold-fitting approaches}\label{sect:numercal1d}
In this subsection, we compare our algorithm with other methods (local PCA, Isomap, and LLD) on three low-dimensional manifolds (helix, six-folded, and Swiss roll).

\begin{figure}[h]
    \includegraphics[width = 1\linewidth, height = 0.45\linewidth]{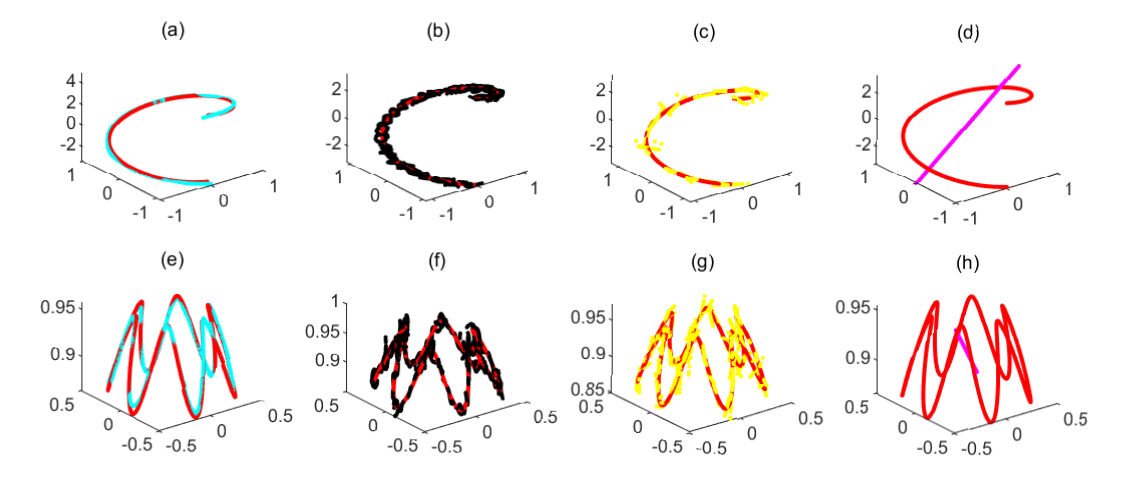}        
    \caption{Simulation of one-dimensional manifold. (a) and (e): The results
of our algorithm (teal) and the noiseless sample points (red). (b) and (f): The
results of LLD (black) and the noiseless sample points (red). (c) and (g): The
results of local PCA (yellow) and the noiseless sample points (red). (d) and (h): The
results of Isomap (pink) and the noiseless sample points (red).}
    \label{fig:com1dim}
\end{figure}

For one-dimensional manifolds, we applied all compared algorithms, and the results are shown in Figure \ref{fig:com1dim}. Our method is able to output smooth curves that are close to the underlying manifolds, as shown in (a) and (e). LLD generates locally smooth irregular curves that deviate significantly from the underlying manifolds. Local PCA is able to approximate the manifold locally with linear vectors and retrieve some geometric properties of the manifold. However, the variance of the sample points can affect the results and cause the local PCA to fail, as shown in (c) and (g). Isomap is a dimension-reduction method, and can be used to visualize the sample points in the lower-dimensional space. However, as shown in the simulations, it cannot be used to retrieve the geometric properties of the manifold in the ambient space.



\begin{figure}[h]
    \includegraphics[width = 1\linewidth, height = 0.25\linewidth]{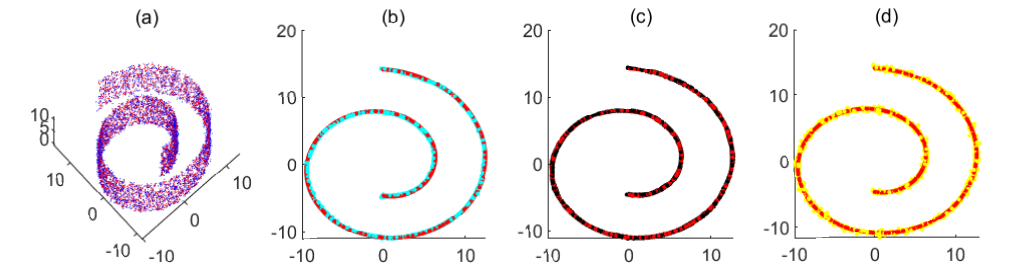}        
    \caption{Simulation of Swiss roll sample points. (a) 3D scatter plot of the
noisy (blue) and noiseless sample points (red). (b) Projected 2D
plot of our method (pink) and the noiseless sample points (red). (c) Projected 2D
plot of LLD method (black) and the noiseless sample points (red). (d) Projected 2D
plot of local PCA method (yellow) and the noiseless sample points (red).}
    \label{fig:com2dim}
\end{figure}
 
For the Swiss roll data, we applied our algorithm to the sample points, and the results are shown in Figure \ref{fig:com2dim}. 
The results from our algorithm, LLD, and local PCA were projected onto the $x$-$y$ plane by removing the $z$ component, and are plotted in (b), (c), and (d) of Figure \ref{fig:com2dim}. Here, it can be observed that our algorithm was also able to denoise the sample points, and the results show that we were able to retrieve the geometric properties of the manifold. Surprisingly, LLD and local PCA were also able to denoise the sample points, unlike in the one-dimensional case.

\begin{remark}
For this simulation, we do not show the results from Isomap, because the sample points would just have been projected onto a two-dimensional plane.
\end{remark}

\subsection{Image Data Simulation}\label{sect:numerical_image1}

The image sample points were taken from the video ``Sub 4'' from \ctn{imageref}. 
The video is a recording of the face of a person in accordance with a set of instructions provided in Table 1 in \ctn{imageref}. The video is accessible through the following link: \url{https://sites.google.com/site/nirdatabase/download}. We chose to work with ``Sub 4'' as we wanted more facial features to test our algorithm.

The video consisted of 16,708 frames, and each frame was cropped and converted to a grayscale image. In addition, we resized each image to $70\times 50$ pixels. Our sample points were then $X=\{\mathbf{X}_1,\ldots,\mathbf{X}_{16708}\}$, where $\mathbf{X}_i$ was a matrix of dimension $70\times 50$. To compare the effectiveness of our method with other methods, we added Gaussian noise with different standard deviations of $0.01, 0.025, 0.05$, and $0.075$ (corresponding to noise levels $1$, $2$, $3$, and $4$, respectively) to the sample points, and ran our algorithm at the four levels of noise. 

To illustrate the comparison with other methods, we selected five different head postures. The full results are available at \url{https://zhigang-yao.github.io/research.html}. The original images of the five different head postures are shown in Figure \ref{fig:original}.

\begin{figure}[h!]
\centering
\resizebox{5.5cm}{1.13cm}{
\includegraphics{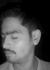}
\hspace{.01in}
\includegraphics{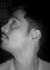}
\hspace{.01in}
\includegraphics{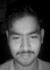}
\hspace{.01in}
\includegraphics{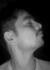}
\hspace{.01in}
\includegraphics{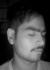}}
\caption{Noiseless images.}
\label{fig:original}
\end{figure}

The results are presented in Figures \ref{fig:image0p01}, \ref{fig:image0p025}, \ref{fig:image0p05}, and \ref{fig:image0p075}, with each figure showing the results for one of the four levels of noise, in ascending order. In each figure, the six rows, from top to bottom, display the results from the noisy image, Isomap, global PCA, local PCA, LLD, and our algorithm.

For each noise level, we used 10 leading eigenvectors for PCA, as the PCA results would not have differed much even if we had worked with more than 10 leading eigenvectors; in addition, the results would actually have worsened if more than 20 principal components had been chosen. For a fair comparison, we also used 10 leading eigenvectors for all the algorithms. To be consistent in our analysis, we chose 150 nearest neighbors for the tangent-space approximation in MLS and used functions of degree one (linear) when applying the MLS step. For every noise level, our algorithm was iterated five times.

\begin{figure}[h!]
\begin{minipage}{0.48\textwidth}
\centering
\resizebox{5.5cm}{1.13cm}{
\includegraphics{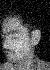}
\hspace{.01in}
\includegraphics{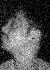}
\hspace{.01in}
\includegraphics{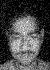}
\hspace{.01in}
\includegraphics{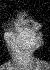}
\hspace{.01in}
\includegraphics{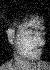}}

\vspace{.05in}

\resizebox{5.5cm}{1.13cm}{
\includegraphics{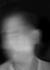}
\hspace{.01in}
\includegraphics{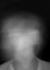}
\hspace{.01in}
\includegraphics{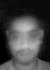}
\hspace{.01in}
\includegraphics{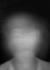}
\hspace{.01in}
\includegraphics{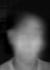}}

\vspace{.05in}

\resizebox{5.5cm}{1.13cm}{
\includegraphics{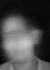}
\hspace{.01in}
\includegraphics{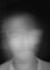}
\hspace{.01in}
\includegraphics{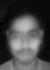}
\hspace{.01in}
\includegraphics{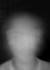}
\hspace{.01in}
\includegraphics{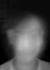}}

\vspace{.05in}

\resizebox{5.5cm}{1.13cm}{
\includegraphics{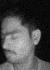}
\hspace{.01in}
\includegraphics{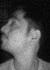}
\hspace{.01in}
\includegraphics{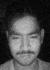}
\hspace{.01in}
\includegraphics{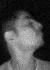}
\hspace{.01in}
\includegraphics{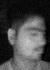}}

\vspace{.05in}

\resizebox{5.5cm}{1.13cm}{
\includegraphics{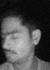}
\hspace{.01in}
\includegraphics{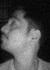}
\hspace{.01in}
\includegraphics{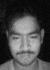}
\hspace{.01in}
\includegraphics{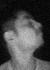}
\hspace{.01in}
\includegraphics{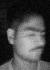}}

\vspace{.05in}

\resizebox{5.5cm}{1.13cm}{
\includegraphics{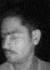}
\hspace{.01in}
\includegraphics{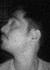}
\hspace{.01in}
\includegraphics{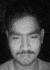}
\hspace{.01in}
\includegraphics{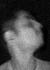}
\hspace{.01in}
\includegraphics{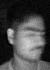}}

\caption{Noise level 1.}
\label{fig:image0p01}
\end{minipage}\hfill
\begin{minipage}{0.48\textwidth}
\centering
\resizebox{5.5cm}{1.13cm}{
\includegraphics{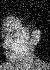}
\hspace{.01in}
\includegraphics{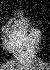}
\hspace{.01in}
\includegraphics{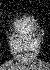}
\hspace{.01in}
\includegraphics{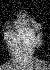}
\hspace{.01in}
\includegraphics{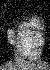}}

\vspace{.05in}

\resizebox{5.5cm}{1.13cm}{
\includegraphics{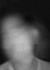}
\hspace{.01in}
\includegraphics{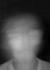}
\hspace{.01in}
\includegraphics{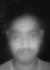}
\hspace{.01in}
\includegraphics{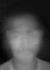}
\hspace{.01in}
\includegraphics{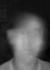}}

\vspace{.05in}

\resizebox{5.5cm}{1.13cm}{
\includegraphics{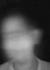}
\hspace{.01in}
\includegraphics{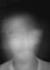}
\hspace{.01in}
\includegraphics{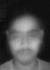}
\hspace{.01in}
\includegraphics{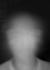}
\hspace{.01in}
\includegraphics{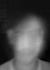}}

\vspace{.05in}

\resizebox{5.5cm}{1.13cm}{
\includegraphics{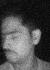}
\hspace{.01in}
\includegraphics{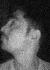}
\hspace{.01in}
\includegraphics{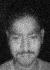}
\hspace{.01in}
\includegraphics{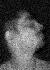}
\hspace{.01in}
\includegraphics{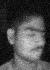}}

\vspace{.05in}

\resizebox{5.5cm}{1.13cm}{
\includegraphics{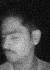}
\hspace{.01in}
\includegraphics{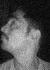}
\hspace{.01in}
\includegraphics{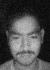}
\hspace{.01in}
\includegraphics{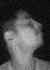}
\hspace{.01in}
\includegraphics{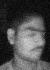}}

\vspace{.05in}

\resizebox{5.5cm}{1.13cm}{
\includegraphics{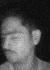}
\hspace{.01in}
\includegraphics{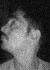}
\hspace{.01in}
\includegraphics{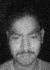}
\hspace{.01in}
\includegraphics{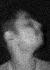}
\hspace{.01in}
\includegraphics{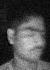}}

\caption{Noise level 2.}
\label{fig:image0p025}
\end{minipage}
\end{figure}

\begin{figure}[h!]
\begin{minipage}{0.48\textwidth}
\centering
\resizebox{5.5cm}{1.13cm}{
\includegraphics{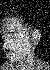}
\hspace{.01in}
\includegraphics{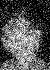}
\hspace{.01in}
\includegraphics{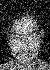}
\hspace{.01in}
\includegraphics{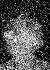}
\hspace{.01in}
\includegraphics{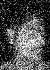}}

\vspace{.05in}

\resizebox{5.5cm}{1.13cm}{
\includegraphics{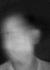}
\hspace{.01in}
\includegraphics{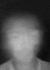}
\hspace{.01in}
\includegraphics{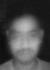}
\hspace{.01in}
\includegraphics{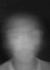}
\hspace{.01in}
\includegraphics{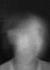}}

\vspace{.05in}

\resizebox{5.5cm}{1.13cm}{
\includegraphics{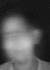}
\hspace{.01in}
\includegraphics{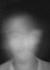}
\hspace{.01in}
\includegraphics{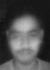}
\hspace{.01in}
\includegraphics{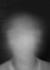}
\hspace{.01in}
\includegraphics{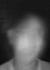}}

\vspace{.05in}

\resizebox{5.5cm}{1.13cm}{
\includegraphics{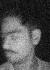}
\hspace{.01in}
\includegraphics{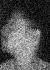}
\hspace{.01in}
\includegraphics{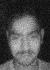}
\hspace{.01in}
\includegraphics{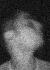}
\hspace{.01in}
\includegraphics{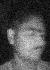}}

\vspace{.05in}

\resizebox{5.5cm}{1.13cm}{
\includegraphics{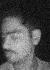}
\hspace{.01in}
\includegraphics{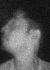}
\hspace{.01in}
\includegraphics{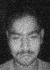}
\hspace{.01in}
\includegraphics{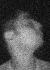}
\hspace{.01in}
\includegraphics{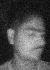}}

\vspace{.05in}

\resizebox{5.5cm}{1.13cm}{
\includegraphics{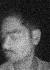}
\hspace{.01in}
\includegraphics{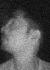}
\hspace{.01in}
\includegraphics{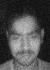}
\hspace{.01in}
\includegraphics{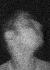}
\hspace{.01in}
\includegraphics{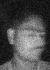}}

\caption{Noise level 3.}
\label{fig:image0p05}
\end{minipage}\hfill
\begin{minipage}{0.48\textwidth}
\centering
\resizebox{5.5cm}{1.13cm}{
\includegraphics{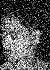}
\hspace{.01in}
\includegraphics{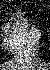}
\hspace{.01in}
\includegraphics{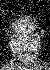}
\hspace{.01in}
\includegraphics{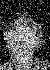}
\hspace{.01in}
\includegraphics{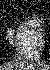}}

\vspace{.05in}

\resizebox{5.5cm}{1.13cm}{
\includegraphics{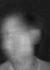}
\hspace{.01in}
\includegraphics{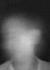}
\hspace{.01in}
\includegraphics{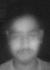}
\hspace{.01in}
\includegraphics{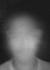}
\hspace{.01in}
\includegraphics{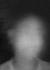}}

\vspace{.05in}

\resizebox{5.5cm}{1.13cm}{
\includegraphics{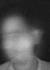}
\hspace{.01in}
\includegraphics{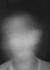}
\hspace{.01in}
\includegraphics{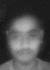}
\hspace{.01in}
\includegraphics{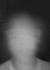}
\hspace{.01in}
\includegraphics{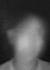}}

\vspace{.05in}

\resizebox{5.5cm}{1.13cm}{
\includegraphics{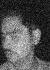}
\hspace{.01in}
\includegraphics{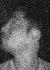}
\hspace{.01in}
\includegraphics{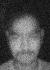}
\hspace{.01in}
\includegraphics{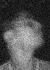}
\hspace{.01in}
\includegraphics{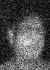}}

\vspace{.05in}

\resizebox{5.5cm}{1.13cm}{
\includegraphics{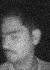}
\hspace{.01in}
\includegraphics{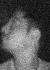}
\hspace{.01in}
\includegraphics{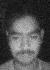}
\hspace{.01in}
\includegraphics{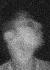}
\hspace{.01in}
\includegraphics{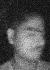}}

\vspace{.05in}

\resizebox{5.5cm}{1.13cm}{
\includegraphics{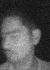}
\hspace{.01in}
\includegraphics{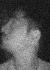}
\hspace{.01in}
\includegraphics{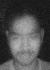}
\hspace{.01in}
\includegraphics{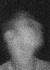}
\hspace{.01in}
\includegraphics{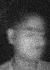}}

\caption{Noise level 4.}
\label{fig:image0p075}
\end{minipage}
\end{figure}

It can be observed that the results from our algorithm are the most visually pleasing, compared with Isomap, global PCA, and local PCA. In addition, the results from Isomap and global PCA show that we were unable to retrieve the main features of the face, even for the sample points with high SNR. On the other hand, local PCA and LLD were able to denoise the images such that the facial features could be retrieved for sample points with a high noise level. For sample points with a low noise level, local PCA and LLD were still able to retrieve some of the features of the face, although the images were noisier than the results from our algorithm.

For the sample points with higher noise levels, our algorithm was still able to denoise the images relatively well, as we were able to retrieve many of the prominent features of the face. 
With a higher level of noise, it naturally became more difficult to denoise the sample points, as indicated by the results for noise level 4. For the higher noise levels, we were unable to retrieve many of the features of the face. However, our algorithm was still able to denoise the sample points minimally, as we were still able to show which direction the face was facing.

These numerical results show that our algorithm is able to work well even with sample points in high-dimensional ambient space, and using only a degree-one polynomial in the MLS step. To achieve better results, we can use a polynomial of a higher degree to make the approximation at the MLS step; however, the trade-off will be higher computational cost.


\section{Conclusion}\label{Sect6}
In this paper, we investigate the problem of manifold fitting. Our focus differs from that of more regular research, which aims to map the input points into a lower-dimensional space to extract the lower-dimensional structure. Instead, we propose to fit a manifold in the ambient space to a set of sample points. A key feature of our approach is the use of the spectrum of the Laplace–Beltrami operator, known as the Shape-DNA, which helps reduce the computational complexity by finding a smaller set of points to operate on, while preserving the geometric structure. This simple and efficient method allows us to approximate a manifold from the sample points without any constraints on the intrinsic dimensions of the underlying manifold. Furthermore, the output of our method is in the original ambient space, which will be more practical for many real-life applications as we will be able to visualize or compare the output in the original space. In our work, we also provide the error bounds for estimating the tangent space using PCA and surface approximation using MLS. The results of our numerical simulations are encouraging. In the simulations for the one-dimensional manifold, the approximated manifold from our method was close to the original manifold. For the face images, the output from our proposed algorithms yielded visually pleasing denoising results for the input with high SNR, and visually acceptable results, compared with PCA, for the input with lower SNR.

We view the subsampling step proposed in this paper as a general step. This subsampling step can also be adapted by other well-established manifold-learning methods to reduce the complexity of their algorithms. As a whole, our algorithm may also be adapted to other dimension-reduction techniques to first denoise the sample points and thereby enable the lower-dimensional structure to be extracted more accurately.

LLD did not work well for the one-dimensional simulation, but seemed to perform better for the image sample points with lower SNR. The reason for this could be the global alignment step involved, which integrates the information of each point from all the local neighborhoods that the point belongs to. A similar global alignment step was also studied by \ctn{LTSA}, who proposed an algorithm for dimension reduction. Therefore, it would be of interest to determine if our manifold-fitting results can be further improved by including the same global alignment step, or whether the global alignment step can be modified to work with our algorithm. These suggestions for the improvement of our proposed algorithm could be considered for future research.

\section*{Acknowledgments}
We are grateful for the financial support from the MOE Tier 1 funding (R-155-000-196-114) and Tier 2 funding (R-155-000-184-112) at the National University of Singapore. 

\appendix
\clearpage
\centerline{\LARGE{\textbf{Supplementary Materials}}}

\section*{S1  Laplace–Beltrami Operator and Heat Operator}
In this section, we provide the definitions for the Laplace–Beltrami operator and heat operator, and discuss their relationship in terms of their spectrum.
\subsection*{S1.1 Laplace–Beltrami Operator}

Let $(\mathcal{M},\bfg)$ be a smooth manifold with an associated metric $\bfg$ embedded in $\mathbb{R}^n$. The Laplace–Beltrami operator $\Delta_\mathcal{M}$ defined on $\mathcal{M}$ is the extension of the Laplace operator $\Delta$ in $\mathbb{R}^n$. In local coordinates, we can write the Laplace–Beltrami operator as
$$\Delta_\mathcal{M} f = \frac{1}{\sqrt{|\det(\bfg)|}}\frac{\partial}{\partial x^i}\Bigg( \sqrt{|\det(\bfg)|}\ g_{ij} \frac{\partial f}{\partial x^j} \Bigg).$$
The Laplace–Beltrami operator admits a spectral decomposition given by the eigenpair $(\lambda_i,\phi_i(x))$, solving the equation
\beq\label{eq:eigenproblem}
\Delta_\mathcal{M} \phi_i(\bfx) = -\lambda_i\phi_i(\bfx),
\eeq
where $\bfx$ is a point on $\mathcal{M}$.

The eigenspectrum of the Laplace–Beltrami operator encompasses many of the important geometric and topological properties of a manifold. Note that the eigenfunctions defined by \eqref{eq:eigenproblem} have been used as a natural basis in shape analysis, which is an analogue to the Fourier basis.

\subsection*{S1.2 Heat Operator}

The heat equation on $\mathcal{M}$ is given by
\beq\label{eq:heateq}
\frac{\partial h(\bfx,t)}{\partial t} = \Delta_\mathcal{M} h(\bfx,t),
\eeq
where $h(\bfx,t)$ is the heat diffusion at time $t$ at point $\bfx$ where $\bfx\in \mathcal{M}$. The heat diffusion $h(x,t)$ describes the flow of heat on the manifold $\mathcal{M}$. Intuitively, we can think of the heat diffusion as a temperature distribution on $\mathcal{M}$. 
\begin{definition*}
[Heat Operator]
The heat operator $H_t$ is defined as the heat distribution at time $t$ of an initial distribution $f(\bfx):=h(\bfx,0)$, i.e. $H_t f(\bfx) := u(\bfx,t).$
\end{definition*}
The heat kernel $K(\bfx,\bfy,t) : \mathcal{M} \times \mathcal{M} \times \mathbb{R}^+  \rightarrow \mathbb{R}^+$ is associated with the heat operator such that
\beq \label{eq:contheat}
H_tf(\bfx)=\int_\mathcal{M}K(\bfx,\bfy,t)f(y)\ dy.
\eeq
If we consider $\mathcal{M}=\mathbb{R}^m$, we have an explicit form for the heat kernel,
\begin{equation*}
K(\bfx,\bfy,t)=\frac{1}{(4\pi t)^{m/2}} \exp\Bigg\{\frac{-\|\bfx-\bfy\|^2}{4t}\Bigg\}.
\end{equation*}

\subsection*{S1.3 Relationship between the Laplace–Beltrami Operator and Heat Operator} \label{sect:LBOnHeat}
The Laplace–Beltrami operator and the heat operator are related such that
$$H_t= \exp\{-t\Delta_\mathcal{M}\}= \sum_{k=0}^{\infty} \frac{(-t\Delta_\mathcal{M})^k}{k!}.$$
Consider the limit as $t\rightarrow 0$; then
$$\Delta_\mathcal{M}=\lim_{t\rightarrow0} \frac{1-H_t}{t}.$$

The eigenfunctions of $H_t$, and hence the eigenfunctions of $\frac{1-H_t}{t}$, coincide with the eigenfunctions of $\Delta_\mathcal{M}$. Let $\lambda_i$ be the $i$-th eigenvalue of $\Delta_\mathcal{M}$; then, the $i$-th eigenvalue of $\frac{1-H_t}{t}$ is $\frac{1-\exp\{-t\lambda_i\}}{t}$. When an invariant metric is chosen, the Laplace–Beltrami operator and the corresponding heat operator admit invariant properties. Given a set of sample points, it is not practical to calculate the Laplace–Beltrami operator on a manifold; instead, we work with the heat operator. The heat kernel can be computed using eigenpairs of the Laplace–Beltrami operator:
\beq \label{eq:heatkernel}
K(\bfx,\bfy,t)=\sum_{i=0}^{\infty}e^{-\lambda_i t}\phi_i(\bfx)\phi_i(\bfy).
\eeq

In a continuous setting, the heat equation is given by \eqref{eq:contheat}. Given a manifold with a finite sample-point set $X=\{ \bfx_1,\ldots,\bfx_k\}$, we can approximate the heat operator as a matrix, $\mathbf{H}_t$, where $(\mathbf{H}_t)_{ij}=K(\bfx_i,\bfx_j,t)$. Let $\mathbf{f}=[f(\bfx_1),\ldots,f(\bfx_k)]^T$. Then, the discrete heat equation is 
\beq\label{eq:disheateq}
(\mathbf{H}_t\mathbf{f})_i=\sum_{j=1}^k K(\bfx_i,\bfx_j,t) \mathbf{f}_j.
\eeq
Here, as $k\rightarrow \infty$, \eqref{eq:disheateq} will converge to \eqref{eq:contheat}. In this paper, we deal with finite sample points, and thus will be using $\mathbf{H}_t$ to approximate the heat operator in our algorithm.
\section*{S2 Derivation of conditions (4.8) and (4.9)}
\setcounter{equation}{0}

In this section, we derive the conditions (4.8) and (4.9), which are required for the right-hand side of the error bound (4.7) to be non-negative.

We note that, for the right-hand side of the error bound (4.7), the numerator is always positive regardless of the $r$ that we choose. Thus we only require the denominator to be positive, i.e.
$$
\frac{r^2}{m+2} - \frac{\mathcal{K}r^4}{2(m+2)^2(m+4)}-\sigma^2\big(\sqrt{m}+\sqrt{n-m}\big) > 0 .
$$
We also note that the denominator, $\frac{r^2}{m+2} - \frac{\mathcal{K}r^4}{2(m+2)^2(m+4)}-\sigma^2\big(\sqrt{m}+\sqrt{n-m}\big)$, with respect to $r^2$ is a quadratic function of the form $ax^2+bx+c$, where $a$ is negative.

For a quadratic function to be positive, i.e. $ax^2+bx+c>0$, with $a<0$, we first require it to have two real roots; the function will be positive between the roots. Thus we arrive at the condition $b^2-4ac>0$, and the function will be positive when 
$$x\in \Bigg( \min\Bigg(\frac{-b\pm\sqrt{b^2-4ac}}{2a}\Bigg) , \max\Bigg(\frac{-b\pm\sqrt{b^2-4ac}}{2a}\Bigg) \Bigg).$$
Substituting 
$$a=-\frac{\mathcal{K}}{2(m+2)^2(m+4)},\quad b=\frac{1}{m+2}\quad \text{and}\quad c=-\sigma^2\big(\sqrt{m}+\sqrt{n-m}\big),$$
we arrive at conditions (4.8) and (4.9).
%
%
\section*{S3 Proof for Theorem 1}
\setcounter{equation}{0}

In this section, we provide the proof for Theorem 1.

Recall that the standard basis, $\{p_{\alpha}\}_{|\alpha|\leq m}$, defined for $\Pi_q^m$ is shifted by the reference point $\bfx$. For simplicity, we assume that there are $J$ such basis functions and $p_1$ corresponds to the constant basis function, i.e. $p_1(\bfx) = \bar 1$. Then, $\hat f_q=\sum_{i=1}^Jc_ip_i$, and we have
\begin{align*}
\hat f_q(\bfx)&=\sum_{i=1}^Jc_{i}p_{i}(\bfx)\\
&=c_1 + \sum_{0<|\alpha|\leq m}c_{\alpha}(\bfx - \bfx)^{\alpha}\\
&=c_1.
\end{align*}
The coefficient, $\bfc=[c_1,\cdots,c_J]^T$, for the approximated polynomial, $\hat f_q$, is defined as the solution for
\beq\label{eq:MLS}
\bfV^T\bfW\bfV\bfc=\bfV^T\bfW(\bfF+\bfe),
\eeq
where $\bfF=[f(\bfx_1),\cdots,f(\bfx_I)]^T$ and $\bfe=[e_1,\cdots,e_I]^T$. Using Taylor expansion and the mean value theorem, we have
$$f(\bfx_i) = f(\bfx) + \sum_{0<|\alpha|\leq m}\frac{D^{\alpha}f(\bfx)}{\alpha!}(\bfx_i - \bfx)^\alpha + \frac{1}{(m+1)!}\sum_{|\alpha|=m+1}D^\alpha f(\pmb{\epsilon}_i)(\bfx_i - \bfx)^\alpha,$$
where $\pmb{\epsilon}_i = (1-a)\bfx_i + a\bfx,$ for $0\leq a\leq 1$. Then,
$$\bfF = f(\bfx)\bfV_1 + \sum_{0<|\alpha|\leq m}\frac{D^{\alpha}f(\bfx)}{\alpha!}\bfV_\alpha + \frac{1}{(m+1)!}\sum_{|\alpha|=m+1}\bfQ_\alpha \bfV_\alpha,$$
where $\bfQ_\alpha=diag\big(D^\alpha f(\pmb{\epsilon}_1), \cdots,D^\alpha f(\pmb{\epsilon}_I)\big)$ and $\bfV_\alpha$ represents the $\alpha$ column of $\bfV$. $\bfV_1$ is a column of 1's with dimension $(1\times I)$, which is also the first column of $\bfV$. From \eqref{eq:MLS}, we get 
\begin{align*}
c_1&=f(\bfx) + \frac{1}{(m+1)!}\Bigg(\sum_{|\alpha|=m+1}(\bfV^T\bfW\bfV)^{-1}\bfV^T\bfW\bfQ_\alpha \bfV_\alpha\Bigg)_1\\ 
&\qquad\quad + \Bigg((\bfV^T\bfW\bfV)^{-1}\bfV^T\bfW\bfe\Bigg)_1\\
&=f(\bfx) + \frac{1}{(m+1)!}\Bigg(\sum_{|\alpha|=m+1}(\bfV^T\bfW\bfV)^{-1}\sum_{i=1}^I (\bfV^T\bfW)_iD^\alpha f(\pmb{\epsilon}_i)(\bfx_i - \bfx)^\alpha\Bigg)_1\\
&\qquad\quad+ \Bigg((\bfV^T\bfW\bfV)^{-1}\sum_{i=1}^I (\bfV^T\bfW)_i e_i\Bigg)_1.
\end{align*}
Following Cramer's rule, we have
$$\Big((\bfV^T\bfW\bfV)^{-1} (\bfV^T\bfW)_i\Big)_1 = \frac{\det ((\bfV^T\bfW\bfV)_{1\leftarrow  (\bfV^T\bfW)_i})}{\det (\bfV^T\bfW\bfV)}.$$
Therefore,
\begin{align*}
c_1&=f(\bfx) + \frac{1}{(m+1)!}\sum_{|\alpha|=m+1}\sum_{i=1}^I \frac{\det ((\bfV^T\bfW\bfV)_{1\leftarrow  (\bfV^T\bfW)_i})}{\det (\bfV^T\bfW\bfV)}D^\alpha f(\pmb\epsilon_i)(\bfx_i -\bfx)^\alpha\\
&\qquad \quad +\sum_{i=1}^I \frac{\det ((\bfV^T\bfW\bfV)_{1\leftarrow  (\bfV^T\bfW)_i})}{\det (\bfV^T\bfW\bfV)}e_i
\end{align*}
and
\begin{align*}
\hat f_q(\bfx)-f(\bfx)&= c_1-f(\bfx)\\
&= \frac{1}{(m+1)!}\sum_{|\alpha|=m+1}\sum_{i=1}^I \frac{\det ((\bfV^T\bfW\bfV)_{1\leftarrow  (\bfV^T\bfW)_i})}{\det (\bfV^T\bfW\bfV)}D^\alpha f(\pmb\epsilon_i)(\bfx_i - \bfx)^\alpha\\
&\qquad +\sum_{i=1}^I \frac{\det ((\bfV^T\bfW\bfV)_{1\leftarrow  (\bfV^T\bfW)_i})}{\det (\bfV^T\bfW\bfV)}e_i\\
|\hat f_q(\bfx)-f(\bfx)|&\leq\Bigg| \frac{C_{\alpha}}{(m+1)!}\sum_{|\alpha|=m+1}\sum_{i=1}^I \frac{\det ((\bfV^T\bfW\bfV)_{1\leftarrow  (\bfV^T\bfW)_i})}{\det (\bfV^T\bfW\bfV)}(\bfx_i - \bfx)^{\alpha} \Bigg|\\
&\qquad +\Bigg|\sum_{i=1}^I \frac{\det ((\bfV^T\bfW\bfV)_{1\leftarrow  (\bfV^T\bfW)_i})}{\det (\bfV^T\bfW\bfV)}e_i\Bigg|.
\end{align*}
Let $C=\max_{i=1}^I \frac{\det ((\bfV^T\bfW\bfV)_{1\leftarrow  (\bfV^T\bfW)_i})}{\det (\bfV^T\bfW\bfV)}$. We then see that
$$\Bigg| \frac{C_{\alpha}}{(m+1)!}\sum_{|\alpha|=m+1}\sum_{i=1}^I \frac{\det ((\bfV^T\bfW\bfV)_{1\leftarrow  (\bfV^T\bfW)_i})}{\det (\bfV^T\bfW\bfV)}(\bfx_i - \bfx)^{\alpha} \Bigg| = O(r^{m+1}),$$
and 
$$|\hat f_q(\bfx)-f(\bfx)| \leq O(r^{m+1}) + \Bigg|C\sum_{i=1}^I e_i\Bigg|.$$
We know that the sum of normally distributed random variables is also a random variable, i.e. $\sum_{i=1}^I e_i \sim \mathcal{N}(0,\sigma^2\bfI)$. We get $\frac{\sum_{i=1}^I e_i}{\sigma\sqrt{I}} \sim N(0,1)$ and
\begin{align*}
Pr\Bigg( \Bigg|\frac{\sum_{i=1}^I e_i}{\sigma\sqrt{I}}\Bigg| > |k|  \Bigg) &\leq \frac{2}{\sqrt{2\pi}k}e^{-\frac{k^2}{2}},\\
Pr\Bigg( \Bigg|\sum_{i=1}^I e_i \Bigg| > |k\sigma\sqrt{I}|  \Bigg) &\leq \frac{2}{\sqrt{2\pi}k}e^{-\frac{k^2}{2}},\\
Pr\Bigg( \Bigg|\sum_{i=1}^I e_i \Bigg| \leq |k\sigma\sqrt{I}|  \Bigg) &\geq 1- \frac{2}{\sqrt{2\pi}k}e^{-\frac{k^2}{2}}.
\end{align*}
Therefore,
\begin{align*}
|\hat f_q(\bfx)-f(\bfx)| &\leq O(r^{m+1}) + |Ck\sigma\sqrt{I}|,
\end{align*}
with probability at least $1- \frac{2}{\sqrt{2\pi}k}\exp\{-\frac{k^2}{2}\}$. \qed

\section*{S4 Additional Numerical Simulations}

In this section, we provide additional numerical simulations to show the stability of our algorithm and provide additional comparisons with the other methods.

\subsection*{S4.1 The Ablation Experiment for Six-folded and Swiss roll}
In this subsection, we provide the results of ablation experiments on the low-dimensional manifolds six-folded and Swiss roll. We compare the two algorithms (MLS algorithm with subsampling and without subsampling), with the results shown in Figures \ref{fig:ablationsixfold} and \ref{fig:ablationswiss}. It can be observed from Figure \ref{fig:ablationsixfold} that MLS without subsampling yields only discrete data points rather than smooth manifolds. When the curvature is large, the error of MLS without subsampling is also large. In contrast, MLS with subsampling produces smooth curves and better approximation results for sample points with larger curvature. The results of the Swiss roll are very similar to those of six-folded.

\begin{figure}[h]
    \includegraphics[width = 1\linewidth, height = 0.7\linewidth]{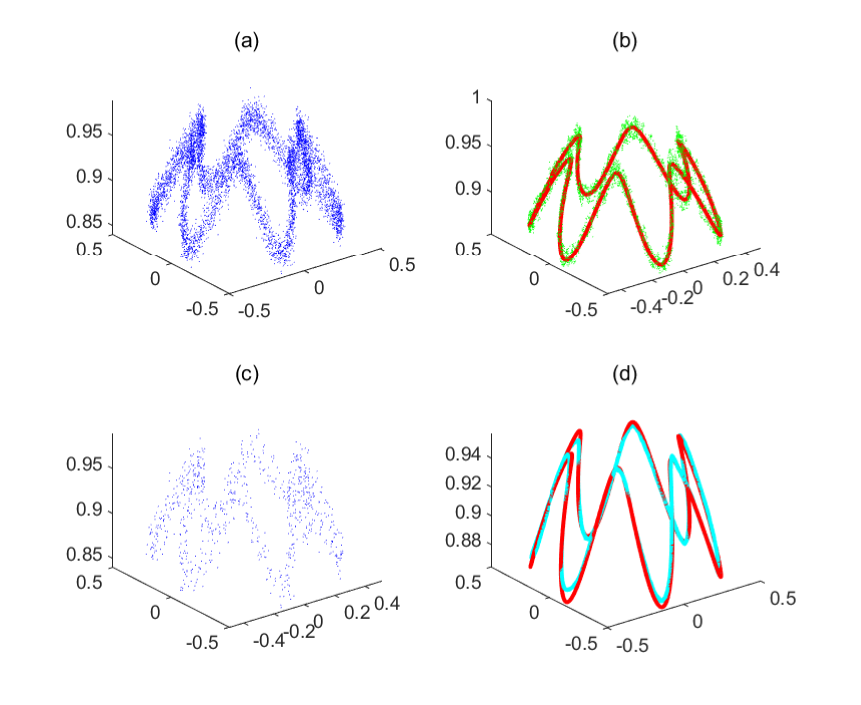}        
    \caption{Ablation experiments for subsampling on six-folded. (a) Sample points with noise. (b) The result of the MLS algorithm without subsampling (green) and the noiseless sample points (red). (c) The anchor points obtained by subsampling. (d) The result of the MLS algorithm with subsampling (teal) and the noiseless sample points (red).}
    \label{fig:ablationsixfold}
\end{figure}

\begin{figure}[h]
    \includegraphics[width = 1\linewidth, height = 0.28\linewidth]{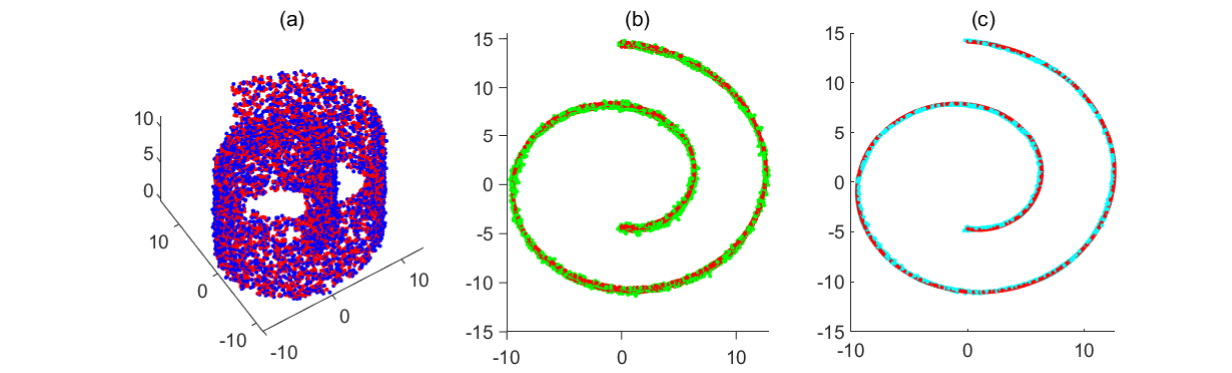}        
    \caption{Ablation experiments for subsampling on Swiss roll. (a) Sample points with noise. (b) The result of the MLS algorithm without subsampling (green) and the noiseless sample points (red). (c)  The result of the MLS algorithm with subsampling (teal) and the noiseless sample points (red).}
    \label{fig:ablationswiss}
\end{figure}

\subsection*{S4.2 Image Data Simulation}
For this simulation, we worked with the same sample points as in the Image Data Simulation 2 of the main paper, but added a different type of noise to only 15 of the frames. We chose the same five frames as in Figure 6 of the main paper and, for each of the five frames, selected the two nearest frames in terms of the Euclidean distance. For each of the 15 frames, we added Gaussian noise in a narrow horizontal band in the middle of the image. Our algorithm was then compared with LLD, and the results are shown in Figure \ref{fig:5n1}.
\begin{figure}[h!]
\centering
\resizebox{5.5cm}{1.13cm}{
\includegraphics{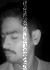}
\hspace{.01in}
\includegraphics{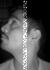}
\hspace{.01in}
\includegraphics{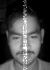}
\hspace{.01in}
\includegraphics{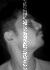}
\hspace{.01in}
\includegraphics{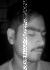}}

\vspace{.05in}

\resizebox{5.5cm}{1.13cm}{
\includegraphics{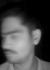}
\hspace{.01in}
\includegraphics{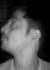}
\hspace{.01in}
\includegraphics{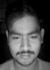}
\hspace{.01in}
\includegraphics{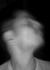}
\hspace{.01in}
\includegraphics{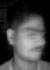}}

\vspace{.05in}

\resizebox{5.5cm}{1.13cm}{
\includegraphics{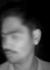}
\hspace{.01in}
\includegraphics{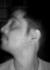}
\hspace{.01in}
\includegraphics{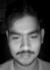}
\hspace{.01in}
\includegraphics{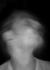}
\hspace{.01in}
\includegraphics{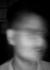}}

\caption{The first, second, and third rows display the noisy images, the results from LLD, and those from our algorithm, respectively.}
\label{fig:5n1}
\end{figure}

For this simulation, we wanted to determine whether, if only a handful of sample points were corrupted in such a way that most of the image remained noiseless, the manifold-learning algorithms would be able to denoise the images much better, or would corrupt the images more instead. Here, as the majority of the images were noiseless, we chose to use 5 principal components and 200 nearest neighbors for LLD and tangent-space approximation to simulate the algorithms. The results show that both our method and LLD were able to almost fully denoise the image, with the results of our method being better. However, there was also some blurring of the facial features, which is noticeable in the fourth and fifth images for both our method and LLD. This was the result of a smaller number of principal components being used.

\subsection*{S4.3 Radius test}\label{sect:radius_test}
For this simulation, we worked with the same sample points of noise level 1 from Subsection 5.3 of the main paper. However, we applied our algorithm using 150 nearest neighbors, and then repeated the algorithm twice, with 75 and 225 nearest neighbors, to compare the effects of using different numbers of nearest neighbors.
\begin{figure}[h!]
\centering
\raisebox{\dimexpr 0.56cm-0.5\height}{\phantom{1}75 nearest neighbors:}\hspace{\bibindent}
\resizebox{5.5cm}{1.13cm}{
\includegraphics{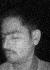}
\hspace{.01in}
\includegraphics{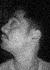}
\hspace{.01in}
\includegraphics{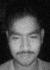}
\hspace{.01in}
\includegraphics{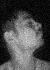}
\hspace{.01in}
\includegraphics{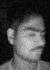}}

\vspace{.05in}
\raisebox{\dimexpr 0.56cm-0.5\height}{150 nearest neighbors:}\hspace{\bibindent}
\resizebox{5.5cm}{1.13cm}{
\includegraphics{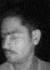}
\hspace{.01in}
\includegraphics{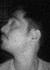}
\hspace{.01in}
\includegraphics{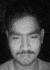}
\hspace{.01in}
\includegraphics{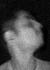}
\hspace{.01in}
\includegraphics{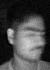}}

\vspace{.05in}

\raisebox{\dimexpr 0.56cm-0.5\height}{225 nearest neighbors:}\hspace{\bibindent}
\resizebox{5.5cm}{1.13cm}{
\includegraphics{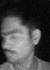}
\hspace{.01in}
\includegraphics{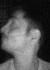}
\hspace{.01in}
\includegraphics{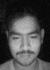}
\hspace{.01in}
\includegraphics{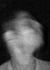}
\hspace{.01in}
\includegraphics{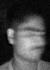}}

\caption{Comparison between experiments with different numbers of nearest neighbors.}
\label{fig:4n1}
\end{figure}

The results in Figure \ref{fig:4n1} show that, with 75 nearest neighbors, the output still contains a substantial amount of noise, compared with the output with 150 or 225 nearest neighbors. This supports the theoretical results in Subsection 4.1 of the main paper, and indicates that the radius cannot be too small, or there will be a larger error.

The output with 150 nearest neighbors shows clearer facial features than that with 225 nearest neighbors. If we had included more neighborhood points, the neighborhood radius would have increased, which would have had a negative effect on the accuracy; this is also indicated by the theoretical results of Subsection 4.1 of the main paper, which suggests that a larger radius will cause the curvature to have a larger effect on the approximation.

\subsection*{S4.4 Stability Test}\label{sect:box1}
We simulated multiple different sets of sample points for the helix, six-folded curve and Swiss roll using different Gaussian noise (with the same Gaussian-noise-parameter settings) for each set of sample points. For each type of manifold, we had 50 simulations, and we plotted the Hausdorff error for each simulation using a boxplot, as shown in Figure \ref{fig:boxplot}.

\begin{figure}[h!]
\includegraphics[height=1.2in,width=1.7in]{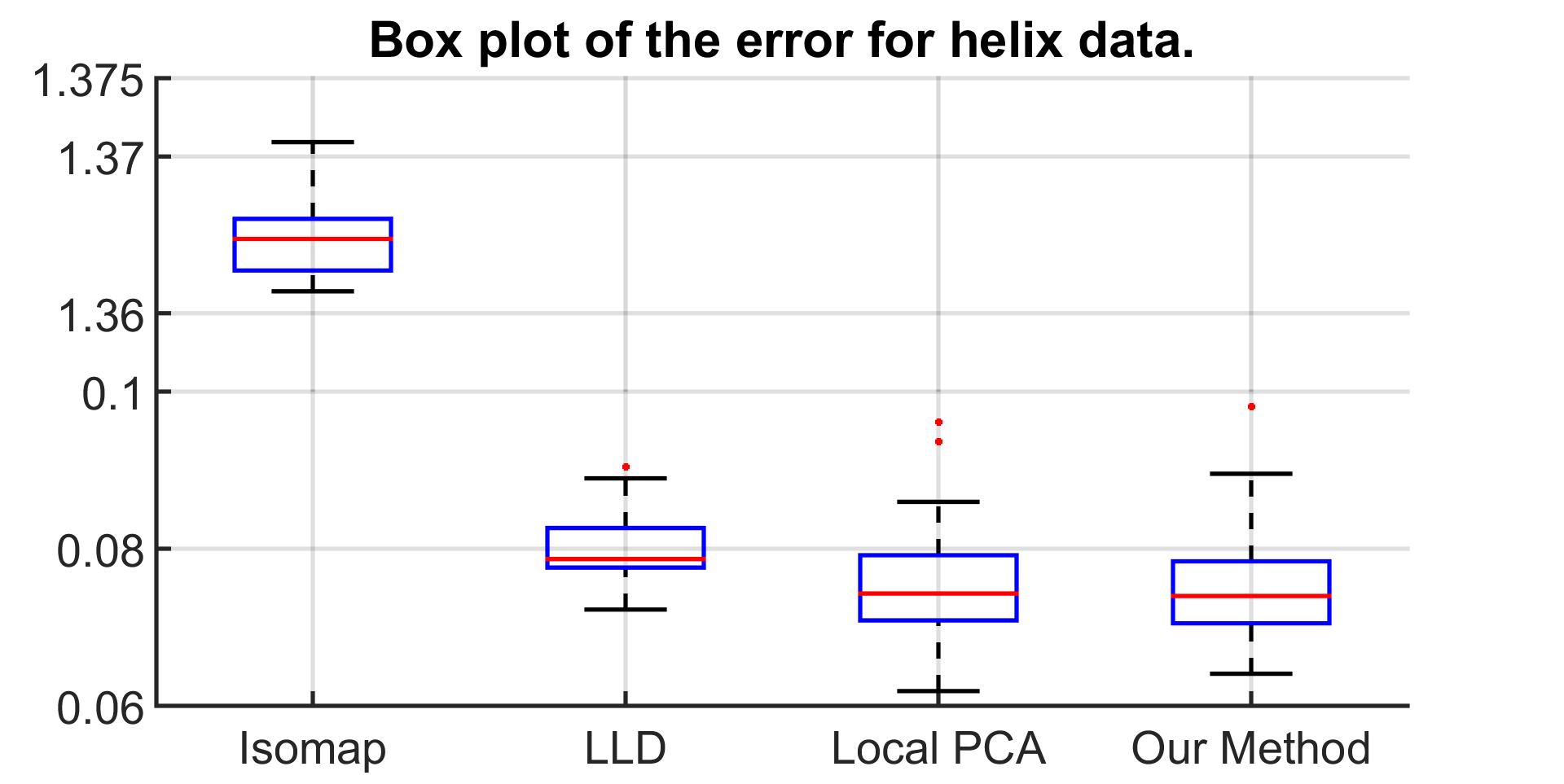}
\includegraphics[height=1.2in,width=1.7in]{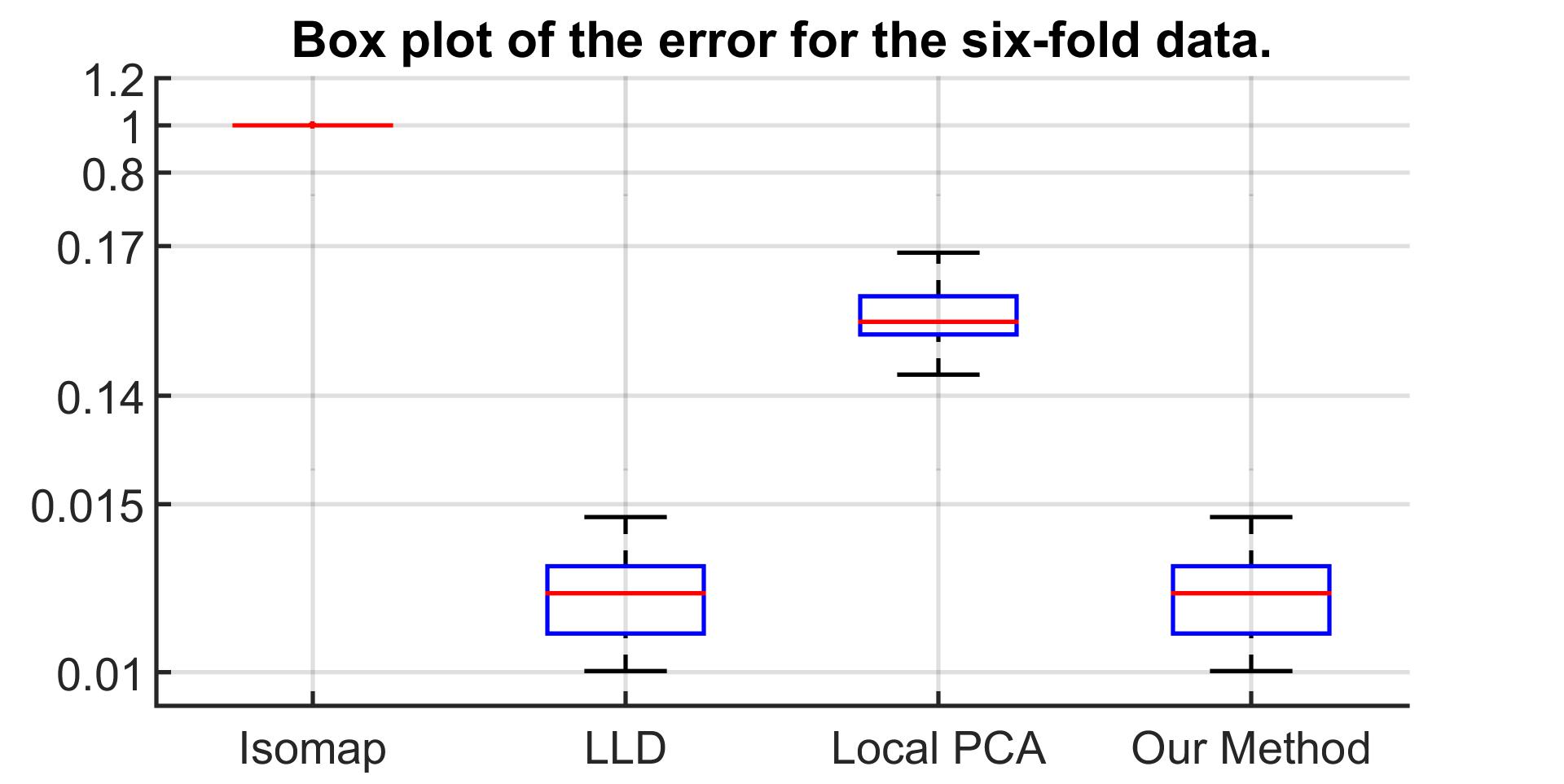}
\includegraphics[height=1.2in,width=1.7in]{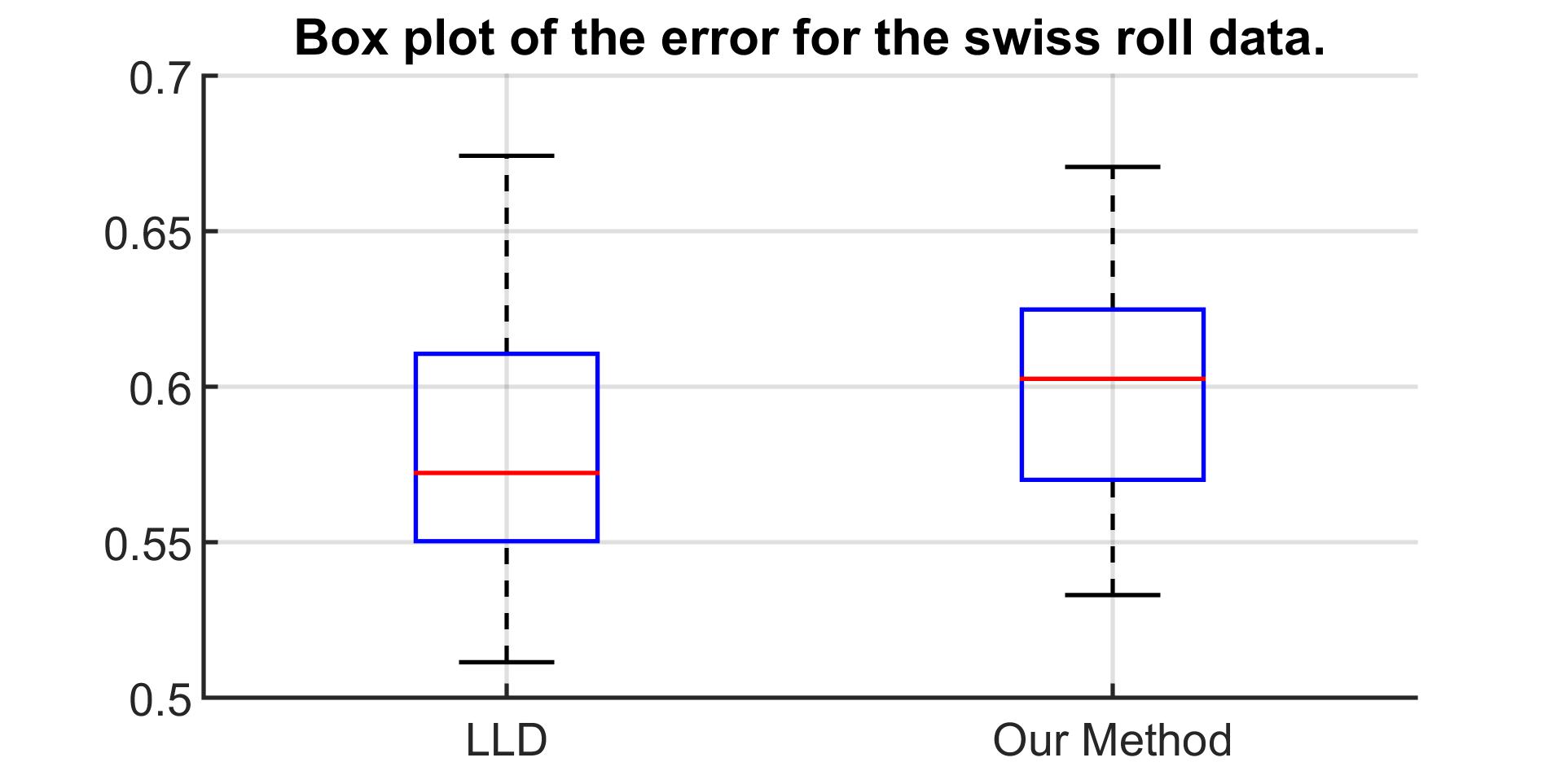}
\caption{Boxplot of Hausdorff error.}
\label{fig:boxplot}
\end{figure}

Figure \ref{fig:boxplot} shows that our algorithm is stable and the Hausdorff error for our method is close to that of LLD. Furthermore, our method performs better than local PCA and Isomap, as discussed in the main paper.

\section*{S5 Eigenspectrum for Laplace–Beltrami operator}
We compare two sets of sample points on a unit sphere, with one set containing 624 points and the other containing 2526 points. The exact eigenvalue of the Laplace–Beltrami operator for the unit sphere is $l(l+1)$ with multiplicity $2l+1$ for $l=0,1,2,\ldots$. Here, there is an infinite number of eigenvalues when we consider the manifold in a continuous sense. However, when we consider a set of sampled points of the manifold, we are only able to approximate a finite number of the eigenvalues equal to the number of sampled points. We approximate the eigenvalues using the algorithm from \ctn{shapeDNA} for both sets. The difference, calculated by $\Big|\frac{\lambda_{i,624}-\lambda_{i,2526}}{\lambda_{i,\text{exact}}}\Big|$, where $\lambda_{i,624}$ and $\lambda_{i,2526}$ are the $i$-th smallest eigenvalues for the sets containing 624 and 2526 points, respectively, and $\lambda_{i,exact}$ is the exact eigenvalue, is plotted in Figure \ref{fig::sphereeigen}. Here, it can be observed that, although the normalized difference in eigenvalues increases, it still remains quite small even at the $150^{\text{th}}$ eigenvalue. The increase in the difference is to be expected as we are using a smaller number of points to approximate a continuous manifold, i.e. 624 as opposed to 2526 points. However, the benefit is that, with a smaller number of points, we are still able to extract the structure of the underlying manifold and achieve a lower cost in terms of computational complexity.

\begin{figure}[h!]
\begin{center}
\includegraphics[width=1.2in]{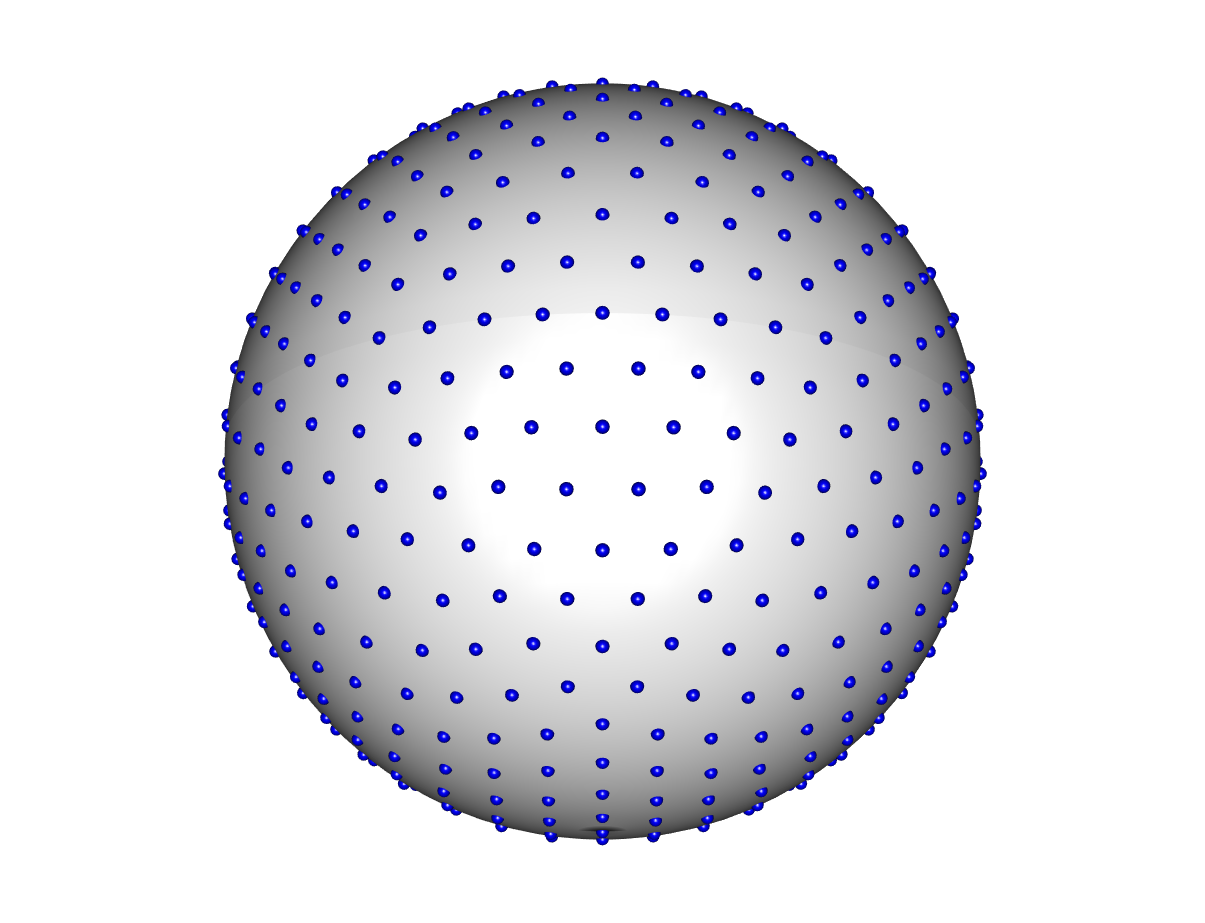}
\includegraphics[width=1.2in]{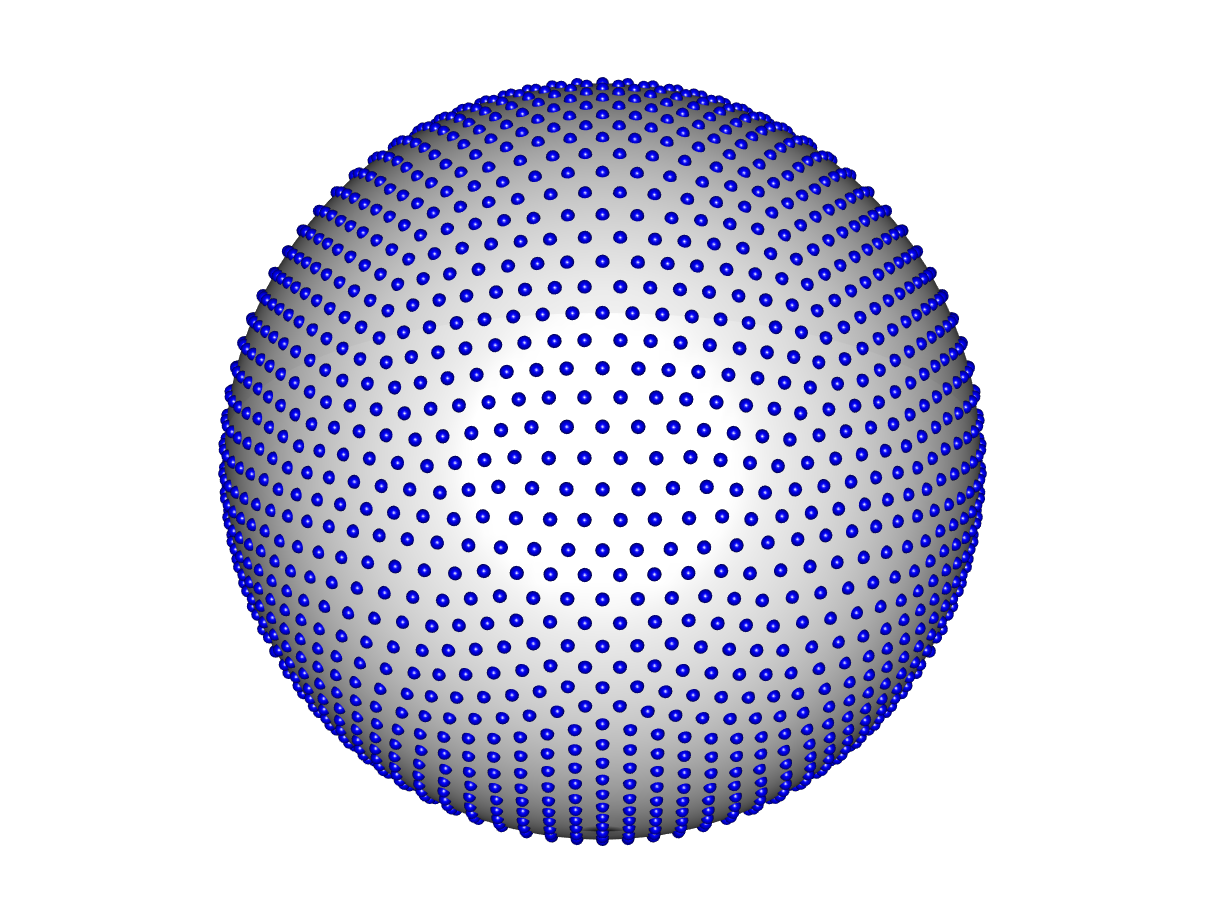}
\includegraphics[width=2.4in]{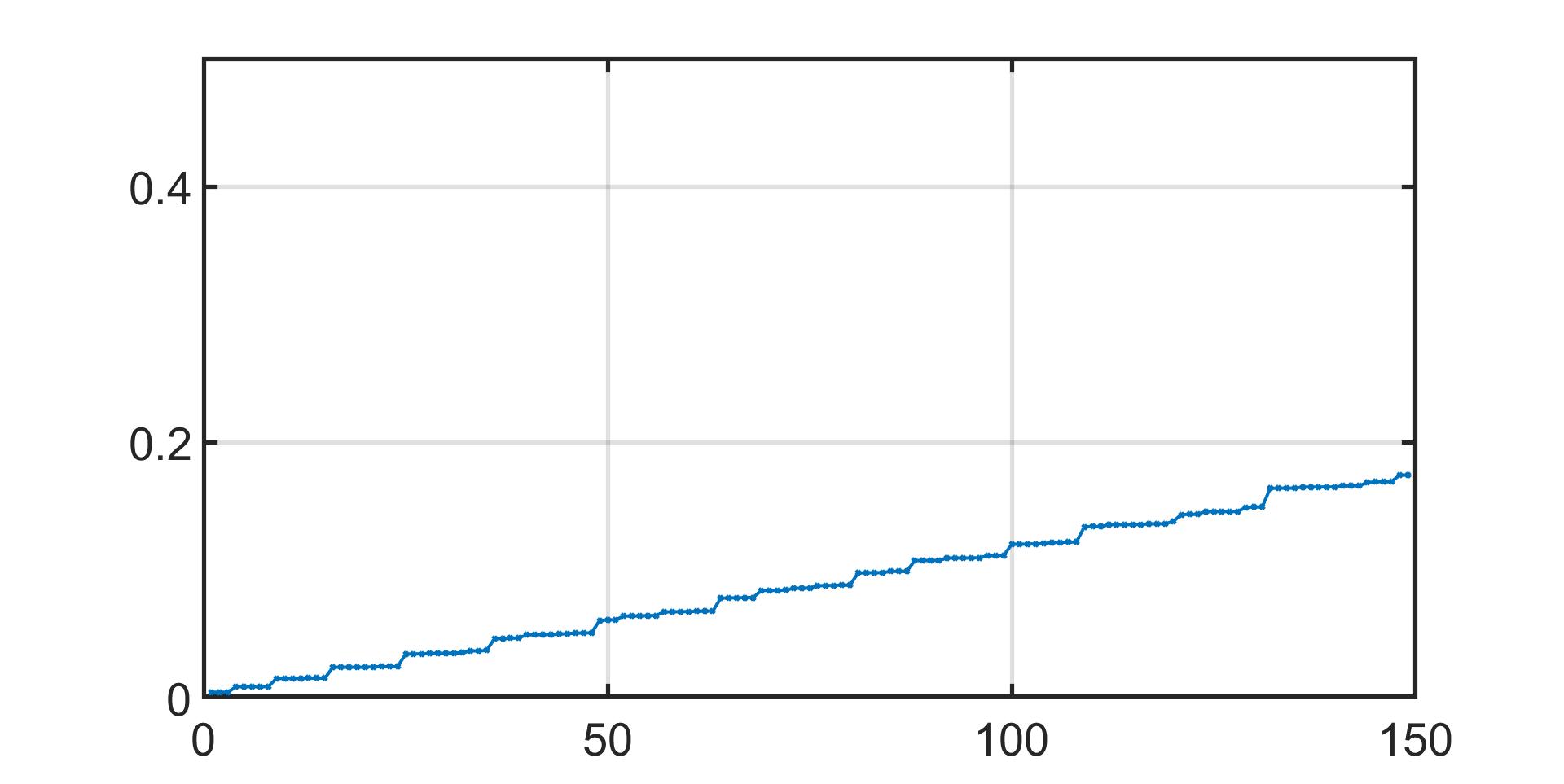}
\end{center}
\caption{Comparison of the eigenvalues of the Laplace–Beltrami operator.}
\label{fig::sphereeigen}
\end{figure}

By using the eigenvalues of the Laplace–Beltrami operator, the subsampled set still preserves the geometry of the underlying manifold, as demonstrated by the example of the unit sphere. Therefore, by working with the subsampled set, we can reduce the complexity of the computations and still retrieve the geometry of the underlying manifold.

\bibliographystyle{abbrv}
\bibliography{refer}

\end{document}